\documentclass{article}

\usepackage[final]{neurips_2025}

\usepackage[utf8]{inputenc} 
\usepackage[T1]{fontenc}    
\usepackage{hyperref}       
\usepackage{url}            
\usepackage{booktabs}       
\usepackage{amsfonts}       
\usepackage{nicefrac}       
\usepackage{microtype}      
\usepackage{xcolor}         
\usepackage{algorithmic}
\usepackage{algorithm}
\usepackage{float}
\usepackage{listings}
\usepackage{multirow,multicol}
\usepackage{caption}
\usepackage{subcaption}
\definecolor{our_navy_blue}{RGB}{0, 110, 184}
\definecolor{purple}{RGB}{190, 0, 65}
\definecolor{LQY_color}{RGB}{50, 205, 50}

\newenvironment{proofsketch}{%
  \begin{proof}[Proof Sketch]%
}{%
  \end{proof}%
}

\usepackage{amsmath}
\usepackage{amssymb}
\usepackage{mathtools}
\usepackage{amsthm}
\usepackage{lipsum}

\theoremstyle{definition}

\usepackage{pifont}
\newcommand{\cmark}{\ding{51}}%
\newcommand{\xmark}{\ding{55}}%
\newcommand{\expect}{\mathop{\mathbb E}}%
\usepackage{xspace}
\newcommand{\ours}{{\text{PPL}}\xspace}

\newcommand{\Dpref}{\mathcal{D}_\text{pref}}
\theoremstyle{plain}
\newtheorem{theorem}{Theorem}[section]

\newtheorem{lemma}[theorem]{Lemma}

\theoremstyle{definition}

\theoremstyle{remark}

\usepackage{todonotes}
\usepackage{wrapfig}
\usepackage{enumitem}

\title{Predictive Preference Learning from Human Interventions}

\author{%
Haoyuan Cai,
Zhenghao Peng, 
Bolei Zhou \\
Department of Computer Science, \\University of California, Los Angeles
}

\begin{document}

\maketitle

\begin{abstract}

  Learning from human involvement aims to incorporate the human subject to monitor and correct agent behavior errors.  
  Although most interactive imitation learning methods focus on correcting the agent’s action at the current state, they do not adjust its actions in future states, which may be potentially more hazardous. To address this, we introduce Predictive Preference Learning from Human Interventions (\ours), which leverages the implicit preference signals contained in human interventions to inform predictions of future rollouts. 
  The key idea of \ours is to bootstrap each human intervention into $L$ future time steps, called the preference horizon, with the assumption that the agent follows the same action and the human makes the same intervention in the preference horizon.
  By applying preference optimization on these future states, expert corrections are propagated into the safety-critical regions where the agent is expected to explore, significantly improving learning efficiency and reducing human demonstrations needed. We evaluate our approach with experiments on both autonomous driving and robotic manipulation benchmarks and demonstrate its efficiency and generality. Our theoretical analysis further shows that selecting an appropriate preference horizon $L$ balances coverage of risky states with label correctness, thereby bounding the algorithmic optimality gap. Demo and code are available at: \url{https://metadriverse.github.io/ppl}.
\end{abstract}

\section{Introduction}

Effectively leveraging human demonstrations to teach and align autonomous agents remains a central challenge in both Reinforcement Learning (RL)~\citep{xue2023guarded} and Imitation Learning (IL)~\citep{li2021efficient}. In the literature of RL and more recent RL from Human Feedback (RLHF), the agent explores the environment through trial and error or under human feedback guidance, and the learning process hinges on a carefully crafted reward function that reflects human preferences.
However, RL algorithms often require a large number of environment interactions to learn stable policies, and their exploration can lead agents to dangerous or task-irrelevant states~\citep{saunders2018trial, peng2021safe}. In contrast, IL methods train agents to emulate human behavior using offline demonstrations from experts. Nevertheless, IL agents are susceptible to distributional shift because the offline dataset may lack corrective samples in safety-critical or out-of-distribution states~\citep{ross2010efficient, ravichandar2020recent, chernova2022robot, zare2024survey}.

Interactive Imitation Learning (IIL)~\citep{cai2025robot, reddy2018shared, kelly2019hg, spencer2020learning,peng2024learning, seraj2024interactive, liu2022robot, liu2024multi} incorporates human participants to intervene in the training process and provide online demonstrations. Such methods have improved alignment and learning efficiency in a wide variety of tasks, including robot manipulation~\citep{fang2019survey, ganapathi2021learning}, autonomous driving~\citep{peng2021safe, peng2024learning}, and even the strategy game StarCraft II~\citep{samvelyan2019starcraft}. One line of research on confidence-based IIL designs various task-specific criteria to request human help, including uncertainty estimation~\citep{menda2019ensembledagger} and confidence in completing the task~\citep{chernova2009interactive, Saeidi2018International}. In contrast, an increasing body of work focuses on learning from active human involvement, where human subjects actively intervene and provide demonstrations during training when the agent makes mistakes~\citep{kelly2019hg,spencer2020learning,mandlekar2020human,li2021efficient, peng2024learning}. Compared to confidence-based IIL, active human involvement can ensure training safety~\citep{peng2021safe} and does not require carefully designing human intervention criteria for each task~\citep{hoque2021thriftydagger}. However, these methods require the human expert to monitor the entire training process, predict the agent's future trajectories, and intervene immediately in safety-critical states~\citep{peng2024learning}, imposing a significant cognitive burden on the human participant. In addition, these methods often correct the agent’s behavior only at the current intervened state, penalizing undesired actions step by step. For instance, in \text{HG-DAgger}~\citep{kelly2019hg}, the agent is optimized to mimic human actions solely at the states where interventions occur. In practice, it is intuitive that the agent may repeat similar mistakes in the consecutive future steps $t+1, \cdots, t+L$ following an error at step $t$. As a result, the expert must repeatedly provide corrective demonstrations in these regions, compromising training efficiency~\citep{li2021efficient}.

In this work, we propose a novel Interactive Imitation Learning algorithm, \textit{\textbf{Predictive Preference Learning}} from Human Interventions (\ours), to learn from active human involvement. As shown in Fig.~\ref{fig:teaser}, our approach has two key designs: First, we employ an efficient rollout-based trajectory prediction model to forecast the agent's future states. These predicted rollouts are visualized in real time for the user, helping human supervisors proactively determine when an intervention is necessary. Second, our algorithm leverages preference learning on the predicted future trajectories to further improve the sample efficiency and reduce the expert demonstrations needed. Such designs bring three strengths: (1) They mitigate the distributional shift problem in IIL and improve training efficiency. By incorporating anticipated future states into the training process, our method constructs a richer dataset, especially in safety-critical situations. This expanded dataset offers more information than expert corrective demonstrations in human-intervened states only.
(2) The preference learning reduces the agent's visits to dangerous states, thus suppressing human interventions in safety-critical situations. 
(3) By visualizing the agent's predicted future trajectories in the user interface, we significantly reduce the cognitive burden on the human supervisor to constantly anticipate the agent's behavior.

\begin{figure}
  \centering

  \includegraphics[width=\textwidth]{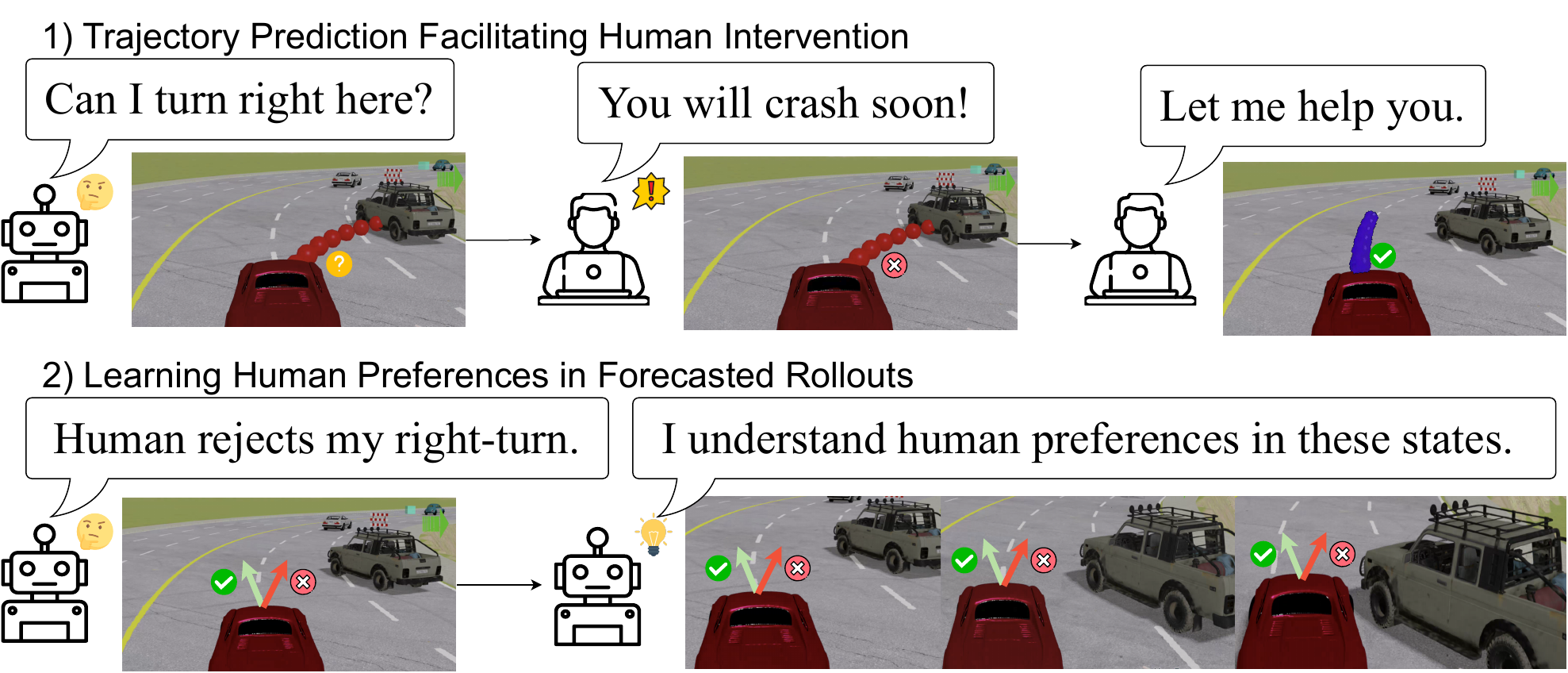}
  \caption{Our Predictive Preference Learning from Human Interventions. (Top) Our approach forecasts the agent’s upcoming trajectory (the red dotted path) and visualizes it for the human expert, who will intervene if the forecasted path indicates an upcoming failure. (Bottom) A single intervention is then interpreted as hypothesized preference signals across the predicted states. These signals reflect the agent’s imputed imagination of what the expert would prefer, guiding the policy to avoid the risky maneuver in similar future contexts. This integration of proactive forecasting and preference learning accelerates policy improvement and reduces the total number of expert interventions required.
  \vspace{-1.5em}
  } \label{fig:teaser}
  
\end{figure}

Our contributions can be summarized as follows:
\vspace{-0.5em}
\begin{enumerate}
\item We introduce a novel Interactive Imitation Learning (IIL) algorithm that leverages trajectory prediction to inform human intervention and employs preference learning to deter the agent from returning to dangerous states.
    
\item We evaluate our algorithm on the MetaDrive~\citep{li2021metadrive} and Robosuite~\citep{zhu2020robosuite} benchmarks, using both neural experts and real human participants, showing that \ours requires fewer expert monitoring efforts and demonstrations to achieve near-optimal policies.

\item We present a theoretical analysis that derives an upper bound on the performance gap of our approach. This bound highlights that the efficacy of our method lies in reducing distributional shifts while preserving the quality of preference data.
\end{enumerate}

\section{Related Work}

\paragraph{Learning from Human Involvement.}
Many works incorporate human involvement in the training loop to provide corrective actions in dangerous or repetitive states. For example, Human-Gated DAgger (HG-DAgger)~\citep{kelly2019hg},  Ensemble-DAgger~\citep{menda2019ensembledagger}, Thrifty-DAgger~\citep{hoque2021thriftydagger}, Sirius~\citep{liu2022robot}, and Intervention Weighted Regression (IWR)~\citep{mandlekar2020human} perform imitation learning on human intervention data. These methods do not leverage data collected by agents or suppress undesired actions likely to be intervened by humans, leading to the agent's susceptibility to entering hazardous states and thus harming sample efficiency.
EGPO~\citep{peng2021safe}, PVP~\citep{peng2024learning}, and AIM~\citep{cai2025robot} design proxy cost or value functions to suppress the frequency of human involvement. However, these approaches still require human supervisors to continuously monitor the agent's behavior throughout training and anticipate potential failures that may necessitate intervention. This continuous oversight imposes a significant cognitive load on the human expert and can limit scalability. Furthermore, these methods do not exploit the agent's predicted future trajectories that the expert might identify as potentially leading to undesirable outcomes, which necessitates repeated corrective demonstrations in such situations.

\paragraph{Preference-Based RL.}

A large body of work focuses on learning human preferences by ranking pairs of trajectories generated by the agent~\citep{christiano2017deep,guan2021widening,reddy2018shared, warnell2018deep, sadigh2017active, palan2019learning}. One prominent paradigm, reinforcement learning from human feedback (RLHF), first trains a reward model on offline human preference data and then uses that model to guide policy optimization~\citep{christiano2017deep, ouyang2022training, stiennon2020learning}. RLHF has achieved impressive results in domains ranging from Atari games~\citep{christiano2017deep} to large language models~\citep{ouyang2022training}. Alternatively, methods such as Direct Preference Optimization (DPO)~\citep{rafailov2023direct}, Contrastive Preference Optimization (CPO)~\citep{xu2024contrastive}, and related variants~\citep{azar2024general, meng2024simpo} bypass explicit reward-model training and instead directly optimize the policy to satisfy preference labels via a classification loss.

However, applying RLHF and DPO to real-time control problems faces challenges due to the need for extensive human labeling of preference data~\citep{rafailov2023direct}. These labels are inherently subjective and prone to noise~\citep{sadigh2017active}. Moreover, acquiring a high-quality preference dataset and achieving near-optimal policies often requires a substantial number of environment samples, thereby imposing a considerable burden on human experts~\citep{hejna2023contrastive}. In contrast, our framework elicits preferences in an online, interactive manner: experts review the agent’s predicted future trajectory at each decision point and intervene when a failure is anticipated; these interventions are then converted into contrastive preference labels. This real-time preference collection enables the policy to adapt continuously to the evolving state distribution and to receive targeted feedback precisely where it is most needed. In summary, our approach \ours bridges preference-based RL and imitation learning by demonstrating that DPO-style alignment techniques can be effectively adapted to control problems within an interactive imitation learning framework.

\section{Problem Formulation}
\label{section:problem-formulation}
In this section, we introduce our settings of interactive imitation learning environments. We use the Markov decision process~(MDP) $M=\left\langle \mathcal{S}, \mathcal{A}, \mathcal{P}, r, \gamma, d_{0}\right\rangle$ to model the environment, which contains a state space $\mathcal{S}$, an action space $\mathcal{A}$, a state transition function $\mathcal{P}:\mathcal{S}\times\mathcal{A}\to\mathcal{S}$, a reward function $r: \mathcal{S}\times\mathcal{A}\to[R_{\min}, R_{\max}]$, a discount factor $\gamma\in(0,1)$, and an initial state distribution $d_0:\mathcal{S}\to[0,1]$. We denote $\pi(a \mid s): \mathcal{S}\times\mathcal{A}\to[0,1]$ as a stochastic policy.
Reinforcement learning (RL) aims to learn a \textit{novice policy} $\pi_n(a | s)$ that maximizes the expected cumulative return
$
J(\pi_n) = \expect\limits_{\tau\sim P_{\pi_n}}[
\sum\limits_{t=0}^{\infty} \gamma^{t} r(s_t, a_t)],
$
wherein $\tau = (s_0, a_0, s_1, a_1, ...)$ is the trajectory sampled from trajectory distribution $P_{\pi_n}$ induced by $\pi_n$, $d_0$ and $\mathcal P$. We also define the discounted state distribution under policy $\pi_n$ as $d_{\pi_n}(s) = (1 - \gamma) \expect\limits_{\tau\sim P_{\pi_n}}[ \sum\limits_{t=0}^{\infty} \gamma^t \mathbb{I}[s_t = s]].$ In this work, we consider the reward-free setting where the agent has no access to the task reward function $r(s, a)$.

In imitation learning (IL), we assume that the human expert behavior $a_h$ follows a \textit{human policy} $\pi_h(a \mid s)$. The agent aims to learn $\pi_n$ from human expert trajectories $\tau_h \sim P_{\pi_h}$, and it needs to optimize $\pi_n$ to close the gap between $\tau_n~\sim P_{\pi_n}$ and $\tau_h$. Prior works on imitation learning have shown that using an offline expert demonstration dataset may lead to poor performance due to out-of-distribution states~\citep{ross2011reduction, reddy2018shared, seraj2024interactive}. Therefore, interactive imitation learning (IIL) methods incorporate a human expert into the training loop to provide online corrective demonstrations, making the state distribution of expert data more similar to that of the novice policy~\citep{peng2024learning, ross2010efficient}. During training, the human expert monitors the agent and can intervene and take control if the agent's action $a_n$ at the current state $s$ violates the human's desired behavior or leads to a dangerous situation. We use the deterministic intervention policy $I(s, a_n): \mathcal S \times \mathcal A \to \{0, 1\}$ to model the human's intervention behavior, where the agent's action follows the novice policy $a_n \sim \pi_n(\cdot \mid s)$, and the human subject takes control when $I(s, a_n) = 1$. 

With the notations above, the agent's actual trajectories during training are derived from the following shared \textit{behavior policy} 
\begin{equation}
  \pi_b(a \mid s)=\pi_n(a \mid s)(1-I(s, a))+\pi_h(a \mid s) G(s),
\end{equation}
wherein $G(s)=\int_{a^{\prime} \in \mathcal{A}} I(s, a^{\prime}) \pi_n(a^{\prime} \mid s) d a^{\prime}$ is the probability of the agent taking an action $a_n$ that will be rejected and intervened by the human expert.

\textbf{Preference Alignment.} Recent works on preference-based RL have also leveraged offline preference datasets to learn human-aligned policies~\citep{rafailov2023direct, xu2024contrastive, azar2024general}. Given an offline preference dataset $\Dpref$ where each preference data $(s, a^+, a^-) \in \Dpref$ means that the human expert prefers the action $a^+$ over $a^-$ at state $s$, we can learn an agent policy $\pi_n$ that aligns with the human preference model. %
The Contrastive Preference Optimization method~\citep{xu2024contrastive} uses the following objective to train an agent policy $\pi_\theta$ from the preference dataset $\Dpref$:
\begin{equation}\label{loss_pref}
    \mathcal{L}_{\text{pref}}(\pi_\theta) = -\expect\limits_{(s, a^+, a^-) \sim \Dpref} \left[ \log \sigma \left( \beta \log \pi_\theta (a^+ \mid s) - \beta \log \pi_\theta(a^- \mid s) \right) \right],
\end{equation}
where $\sigma(\cdot)$ is the Sigmoid function, and $\beta > 0$ is a hyperparameter.

\textbf{Trajectory Prediction Model.} In this work, we allow the agent to access a short-term trajectory prediction model $f(s, a_n, H)$. Given the current state $s$ and the agent's action $a_n$, we can predict the agent's trajectory $f(s, a_n, H) = (s, \tilde{s}_1, \cdots, \tilde{s}_H)$ in the next $H$ steps, where $\tilde{s}_i$ the predicted state that the agent will reach if the agent applies the action $a_n$ for $i$ steps from the state $s$. The implementation detail of $f$ is in Sec. \ref{impledetail:trajprediction}.

\section{Method}

\subsection{Predictive Preference Learning from Human Interventions (\ours)}

We propose \ours (Fig.~\ref{fig:method}), an efficient interactive imitation learning method that emulates the human policy with fewer expert demonstrations and less cognitive effort. 
The key idea of \ours is to learn human preferences from data generated by a future-trajectory prediction model. We illustrate the human-agent interactions in Fig.~\ref{fig:method} (left) and how \ours infers human preference in Fig.~\ref{fig:method} (right).

\begin{figure}[t]
  \centering
  \includegraphics[width=\textwidth]{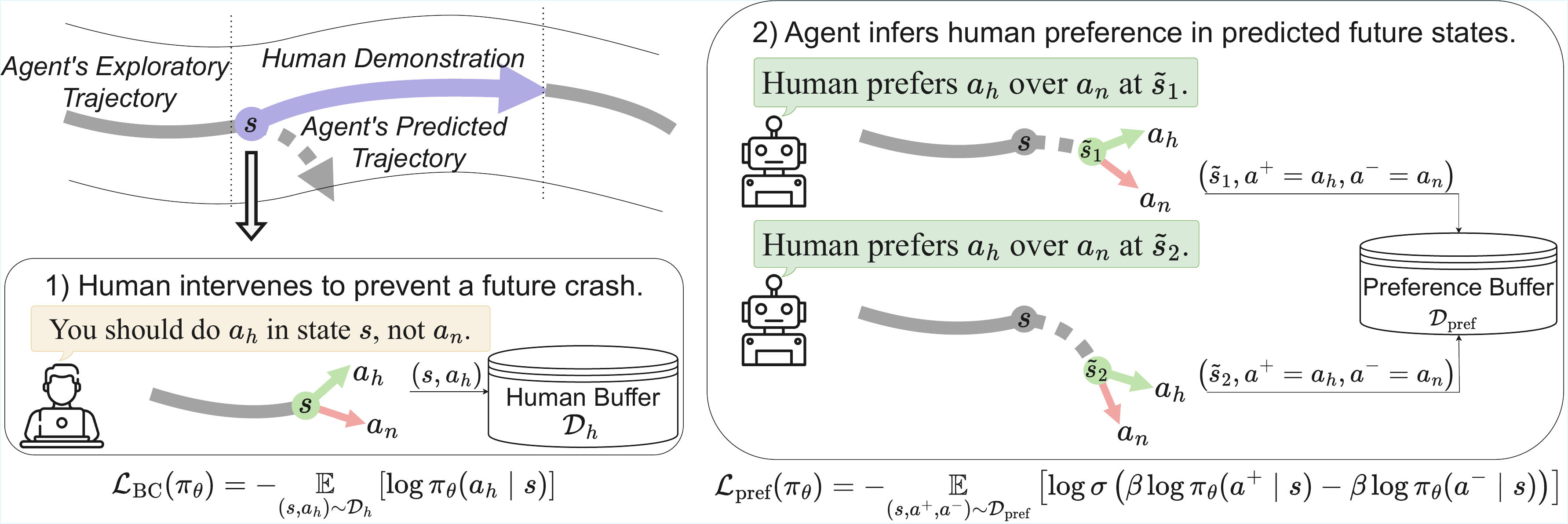}
  \caption{Illustration of Predictive Preference Learning. (Left) At each decision point, the agent proposes an action, and its future trajectory is predicted and visualized. The human expert reviews this rollout and intervenes only when a potential failure is anticipated. The intervention is recorded alongside the state into the human buffer $\mathcal{D}_h$ for behavioral cloning. (Right) Each recorded intervention is then converted into contrastive preference pairs over the predicted future states $\tilde{s}_1, \cdots, \tilde{s}_L$. These preference tuples are stored in a preference buffer $\Dpref$ and used to train the policy via a contrastive classification loss, propagating expert intents into regions the agent is likely to explore.
  \vspace{-0.5em}
  }
  \label{fig:method}
\end{figure}

During training, the human subject monitors the agent-environment interaction in each state $s$ (Fig.~\ref{fig:method} (left)). The novice policy $\pi_n$ suggests an action $a_n$ for the current state $s$. Instead of executing $a_n$ immediately, we query the trajectory prediction model $f(s, a_n, H)$ to obtain a predicted rollout $\tau = f(s, a_n, H) = (s, \tilde{s}_1, \cdots, \tilde{s}_H)$, which we visualize for the human expert. The expert then uses $\tau$ to determine whether the agent will fail in the next $H$ steps, such as crashing into vehicles or going off the road. If so, the expert will provide corrective actions $a_h \sim \pi_h(s)$ for the next $H$ steps, depicted by the blue trajectory in Fig.~\ref{fig:method}. If the expert believes no intervention is needed, the agent continues to use its own policy $\pi_n$ for the next $H$ steps.

We introduce preference learning on the predicted trajectories because it is difficult to learn corrective behavior purely from the expert's demonstrations in safe states. By visualizing predicted rollouts, experts can anticipate unsafe trajectories before the agent actually enters them and intervene preemptively. As a result, the state distribution covered by these early interventions differs substantially from the on-policy distribution of the novice policy, creating a distributional shift that standard imitation or on-policy correction cannot address. Therefore, instead of relying solely on expert demonstrations, we collect preference labels over the predicted rollouts (Fig.~\ref{fig:method} Right) so that the agent can learn the correct behavior in those risky states.

Whenever the expert intervenes at state $s$, we interpret this as indicating that continuing with $a_n$ would lead to unsafe or undesirable outcomes along the predicted trajectory. As shown in Fig.~\ref{fig:method} (right), to capture this preference, we assume the expert prefers $a_h$ over $a_n$ at state $s$ and each of the first $L$ predicted states $\tilde{s}_1, \cdots, \tilde{s}_L$ for some preference horizon $L \leq H$. For each $i \leq L$, we add the tuple $(\tilde{s}_i, a^+ = a_h, a^- = a_n)$ to the preference dataset $\Dpref$. We note that in each tuple $(\tilde{s}_i, a^+ = a_h, a^- = a_n)$, both $a_h$ and $a_n$ are sampled at the current state $s$, not the predicted future states $\tilde{s}_i$, because the exact human corrective actions at hypothetical future states are not directly observable. Still, the expert intervention at state $s$ implies that applying $a_h$ at the predicted states $\tilde{s}_1,\dots,\tilde{s}_L$, rather than continuing with $a_n$, is more likely to prevent the dangerous outcome in the end of the predicted trajectory ($\tilde{s}_H$). Hence, our construction of the preference dataset ensures that it faithfully captures the expert’s corrective intent across the predicted horizon.

The preference horizon $L$ controls the length over which we elicit preferences in the predicted trajectory. A small $L$ may fail to capture enough risky states, while a large $L$ risks applying preferences where the corrective action $a_h$ at state $s$ no longer matches what an expert would do in those imagined states $\tilde{s}_i$. In Theorem \ref{theory}, we prove that under mild assumptions, the performance gap of our learned policy is bounded by terms reflecting the state distribution shift and the quality of the preference labels, implying that an ideal preference horizon $L$ should balance these two error terms. We also illustrate how the choice of $L$ affects the performance of \ours in Fig.~\ref{fig:choiceofL}.

We train the novice policy $\pi_n$ using two complementary objectives. First, we apply a behavioral cloning loss on expert demonstrations $\mathcal{D}_h$:
\begin{equation}\label{loss_bc}
    \mathcal{L}_{\text{BC}}(\pi_\theta) = -\expect\limits_{(s, a_h) \sim \mathcal{D}_h} \left[\log \pi_\theta (a_h \mid s) \right].
\end{equation}

Second, inspired by Contrastive Preference Optimization (CPO)~\citep{xu2024contrastive}, we use the preference-classification loss Eq. \ref{loss_pref} over the predicted states in $\Dpref$. The final loss of the agent policy $\pi_\theta$ is evaluated as 
\begin{equation}
\begin{aligned}
\label{loss_final}
\mathcal{L}(\pi_\theta) &= \mathcal{L}_{\text{pref}}(\pi_\theta) + \mathcal{L}_{\text{BC}}(\pi_\theta) \\ &=  -\expect\limits_{(s, a^+, a^-) \sim \Dpref} \left[ \log \sigma \left( \beta \log \pi_\theta (a^+ \mid s) - \beta \log \pi_\theta(a^- \mid s) \right) \right] -\expect\limits_{(s, a_h) \sim \mathcal{D}_h} \left[\log \pi_\theta (a_h \mid s) \right].
\end{aligned}
\end{equation}

The workflow of our method \ours is summarized in Alg. \ref{ouralgo}.

\begin{figure}[!t]
\begin{minipage}{0.32\linewidth}
\vspace{3pt}
\centerline{\includegraphics[width=\textwidth]{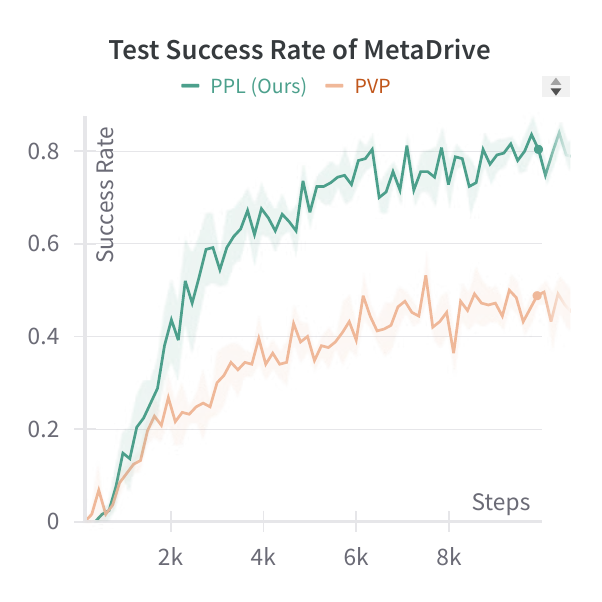}}
\end{minipage}
\begin{minipage}{0.32\linewidth}
		\vspace{3pt}
\centerline{\includegraphics[width=\textwidth]{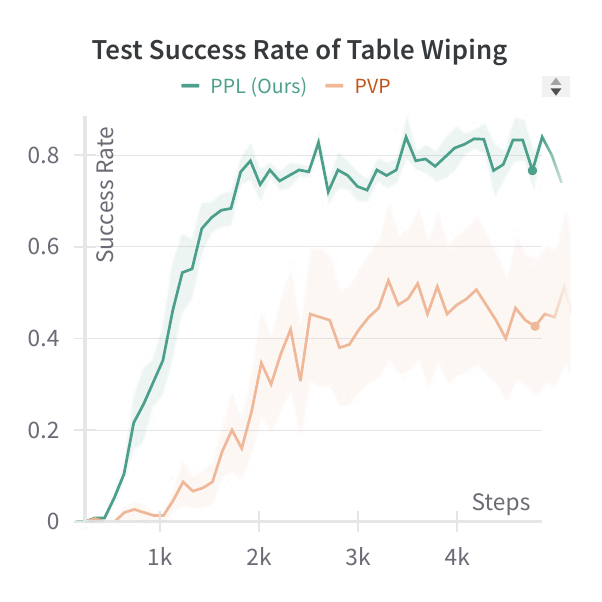}}
\end{minipage}
\begin{minipage}{0.32\linewidth}
		\vspace{3pt}
\centerline{\includegraphics[width=\textwidth]{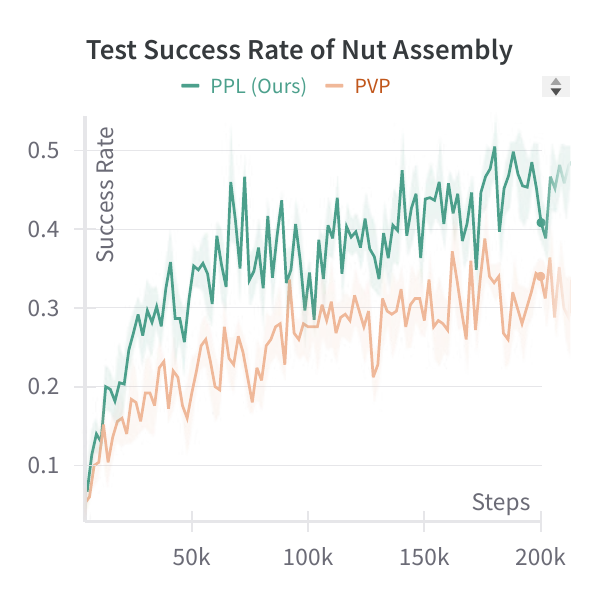}}
\end{minipage}

 \caption{
 The test-time performance curve of \ours and the IIL counterpart PVP~\citep{peng2024learning} under three different environments. The x-coordinate is the number of environment interactions, and the y-coordinate is the agent's success rate in a held-out test environment, where the evaluation is conducted without expert involvement. Compared to the IIL counterpart, our approach achieves much higher learning efficiency and reduces the expert's efforts needed.  \vspace{-1.5em}
 }
\label{figure:main-result}
\end{figure}

\subsection{Analysis} \label{analysis}

We prove that the performance gap between the human policy $\pi_h$ and the agent policy $\pi_n$ can be bounded by the following three error terms: 1) the state distribution shift $\delta_\text{dist}$, 
2) the quality of the preference labels $\delta_\text{pref}$, and 
3) the optimization error $\epsilon$. 

The first error term is defined as $\delta_\text{dist} = D_\text{TV}(d_{\pi_n}, d_{\text{pref}}),$ where $d_{\pi_n}(s)$ is the discounted state distribution of the agent's policy $\pi_n$, and  $d_\text{pref}(s) =  {|\Dpref|}^{-1} \expect_{(s', a^+ ,a^-) \sim \Dpref} \mathbb{I}[s' = s]$. Here, $D_{\text{TV}}(P, Q)=\tfrac{1}{2}|P-Q|_1$ is the total variation distance between two distributions. This error term quantitatively measures the difference between the states actually visited by the agent and those contained in the preference dataset.

The second error term is defined as
$\delta_{\text{pref}} = \expect\limits_{s \sim d_{{\text{pref}}}} D_{\text{TV}}(\rho_{\text{ideal}}^s, \rho_{\text{pref}}^s)$, which arises from the misalignment of the positive actions in the preference dataset, as the human action $a_h$ in each tuple $(\tilde{s}_i, a_h, a_n) \in \Dpref$ is sampled in state $s$ instead of state $\tilde{s}_i$. That is, this error reflects the assumption that the human would still apply the same corrective action $a_h$ in a hypothetical future state $\tilde{s}_i$ reached after executing $a_n$ for $i$ steps, which may not perfectly match what the expert would actually do.
For any state $s$ in $\Dpref$ with $d_{\text{pref}}(s)>0$, the empirical preference‐pair distribution in state $s$ follows 
\begin{equation}
\rho_{\text{pref}}^s(a_h,a_n)
= 
\frac{\expect_{(s', a^+ ,a^-) \sim \Dpref} \mathbb{I}[s' = s, a^+ = a_h, a^- = a_n]}
{
\expect_{(s', a^+ ,a^-) \sim \Dpref} \mathbb{I}[s' = s]
}
.   
\end{equation}
The ideal preference‐pair distribution at any state $s$ in $\Dpref$ is simply the joint distribution of $(a_h, a_n)$: $\rho_\text{ideal}^s(a_h, a_n) = \pi_h(a_h \mid s) \pi_n(a_n \mid s)$ on $\mathcal{A} \times \mathcal{A}$.

Finally, we define the optimization error of the agent policy $\pi_n$ as $\epsilon = \mathcal{L}_{\text{pref}}(\pi_n) - \mathcal{L}_{\text{pref}}(\pi_h)$. We recall that $\mathcal{L}_{\text{pref}}(\pi) = -\expect\limits_{(s, a^+, a^-) \sim \Dpref} \left[ \log \sigma \left( \beta \log \pi (a^+ \mid s) - \beta \log \pi(a^- \mid s) \right) \right]$, where $\beta$ is a positive constant and $\sigma(\cdot)$ is the Sigmoid function. Under these notations, we have the following Thm. \ref{theory}.

\begin{theorem}\label{theory}
We denote the Q-function of the human policy $\pi_h$ as $Q^*(s, a)$. We assume that for any $(s, a, a')$, $  \left| Q^*(s, a) - Q^*(s, a') \right| \leq U$, $ |\log \pi_h (a | s) - \log \pi_h(a' | s) | \leq M$, and $|\log \pi_n (a | s) - \log \pi_n(a'|s)  | \leq M$, where $U, M > 0$ are constants. When $\beta$ is small enough, we have
\begin{equation}
    J(\pi_h) - J(\pi_n) = O(\sqrt{\epsilon + \delta_{\mathrm{pref}}} + \delta_\mathrm{{dist}}).
\end{equation}
\end{theorem}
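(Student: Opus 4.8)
The plan is to relate the return gap to an average total-variation distance between the two policies via the performance difference lemma, and then to control that policy discrepancy by the preference loss through Pinsker's inequality and a small-$\beta$ expansion of the sigmoid. First I would invoke the performance difference lemma with the human policy's value function. Writing $A^{\pi_h}(s,a)=Q^*(s,a)-V^*(s)$ with $V^*(s)=\expect_{a\sim\pi_h}[Q^*(s,a)]$, the lemma gives
$$J(\pi_h)-J(\pi_n)=\frac{1}{1-\gamma}\expect_{s\sim d_{\pi_n}}\Big[\expect_{a\sim\pi_h}Q^*(s,a)-\expect_{a\sim\pi_n}Q^*(s,a)\Big].$$
Since $\int(\pi_h(\cdot\mid s)-\pi_n(\cdot\mid s))\,da=0$, I can re-center $Q^*$ by an arbitrary constant and use $|Q^*(s,a)-Q^*(s,a')|\le U$ to bound the inner difference by $U\,\|\pi_h(\cdot\mid s)-\pi_n(\cdot\mid s)\|_1=2U\,D_\text{TV}(\pi_h(\cdot\mid s),\pi_n(\cdot\mid s))$. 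This reduces the theorem to bounding $\expect_{s\sim d_{\pi_n}}[D_\text{TV}(\pi_h,\pi_n)]$.

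Next I would swap the sampling distribution from $d_{\pi_n}$ to $d_\text{pref}$. Because $D_\text{TV}(\pi_h,\pi_n)\le 1$ pointwise, the change of measure costs at most $\|d_{\pi_n}-d_\text{pref}\|_1=2\delta_\text{dist}$, producing the additive $\delta_\text{dist}$ term, so it remains to bound $\expect_{s\sim d_\text{pref}}[D_\text{TV}(\pi_h,\pi_n)]$. Applying Pinsker's inequality and then Jensen's inequality converts this into $\sqrt{\tfrac12\,\expect_{s\sim d_\text{pref}}[D_{\mathrm{KL}}(\pi_h\|\pi_n)]}$, so the whole problem collapses to showing $\expect_{s\sim d_\text{pref}}[D_{\mathrm{KL}}(\pi_h\|\pi_n)]=O(\epsilon+\delta_\text{pref})$; the square root here is exactly what produces the $\sqrt{\epsilon+\delta_\text{pref}}$ in the statement.

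The crux is this last estimate, which I would obtain in two moves. First, replacing the empirical pair distribution $\rho_\text{pref}^s$ by the ideal factorized distribution $\rho_\text{ideal}^s=\pi_h\otimes\pi_n$ changes each preference loss by at most the oscillation of $-\log\sigma(\beta\cdot)$ over $[-\beta M,\beta M]$, which equals $\beta M$, times $D_\text{TV}(\rho_\text{ideal}^s,\rho_\text{pref}^s)$; averaged over $d_\text{pref}$ this perturbs the loss gap $\epsilon$ by only $O(\beta M\,\delta_\text{pref})$. Second, I would expand $-\log\sigma(\beta x)=\log 2-\tfrac{\beta x}{2}+O(\beta^2x^2)$ inside the ideal loss; the linear term, evaluated at $(a_h,a_n)\sim\pi_h\otimes\pi_n$, telescopes into $-\tfrac{\beta}{2}\expect[\log\pi(a_h\mid s)-\log\pi(a_n\mid s)]$, whose difference between $\pi_n$ and $\pi_h$ equals $\tfrac{\beta}{2}$ times the symmetric divergence $D_{\mathrm{KL}}(\pi_h\|\pi_n)+D_{\mathrm{KL}}(\pi_n\|\pi_h)$ at $s$. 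Hence the ideal loss gap equals $\tfrac{\beta}{2}\expect_{s\sim d_\text{pref}}[\,D_{\mathrm{KL}}(\pi_h\|\pi_n)+D_{\mathrm{KL}}(\pi_n\|\pi_h)\,]+O(\beta^2M^2)$, and dividing by $\beta$ yields $\expect_{s\sim d_\text{pref}}[D_{\mathrm{KL}}(\pi_h\|\pi_n)]=O(\epsilon+\delta_\text{pref})$ once $\beta$ is small enough for the quadratic remainder to be dominated.

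I expect the main obstacle to be controlling the quadratic and higher-order terms of the sigmoid expansion: after dividing the loss gap by $\beta$ to extract the divergence, the Taylor remainder contributes an $O(\beta M^2)$ term that is independent of $\epsilon$ and $\delta_\text{pref}$, so the bounded-log-density assumption $M$ together with the ``$\beta$ small enough'' hypothesis is exactly what forces this remainder below the leading linear term. Checking that the first-order term reproduces the symmetric KL, and that its sign makes the nonnegative gap $\epsilon$ upper-bound the divergence rather than the reverse, is the delicate bookkeeping; once that is secured, chaining the three reductions --- performance difference lemma, distribution-shift change of measure, and Pinsker's inequality --- assembles the claimed bound $O(\sqrt{\epsilon+\delta_\text{pref}}+\delta_\text{dist})$.
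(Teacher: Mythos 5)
Your proposal is correct and follows essentially the same route as the paper's proof: the performance difference lemma plus a change of measure from $d_{\pi_n}$ to $d_{\text{pref}}$ (costing $O(\delta_{\text{dist}})$), swapping the empirical pair distribution $\rho_{\text{pref}}^s$ for the ideal $\rho_{\text{ideal}}^s$ (costing $O(\delta_{\text{pref}})$), and a small-$\beta$ Taylor expansion of $\log\sigma$ whose linear term produces the symmetric KL divergence, converted into the total-variation bound via Pinsker's and Jensen's inequalities. The only difference is cosmetic: your oscillation bound of $\beta M$ for $-\log\sigma(\beta\,\cdot)$ gives a slightly tighter $\delta_{\text{pref}}$ coefficient than the paper's $4(\beta M + \log 2)$, but the decomposition and all key steps coincide with the paper's Lemmas on distribution shift, preference-pair misalignment, and optimization error.
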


Here we explain the insights of Thm. \ref{theory} as follows. In our choice of the preference horizon $L$, the key is to balance the two error terms $\delta_\text{dist} $ and $\delta_\text{pref}$.
Recall that the distribution shift term $\delta_\text{dist}$ measures how close the state distributions are when there is no human intervention ($d_{\pi_n}$) and the state distribution represented in the preference dataset ($d_{\text{pref}}$).
Increasing $L$ decreases $\delta_\text{dist}$ because the preference dataset will contain more predicted states $\tilde{s}_i$ from the agent's future trajectories. 
In contrast, the preference error term $\delta_\text{pref}$ 
captures the misalignment between the true but unobserved human action at a future state $t'>t$ and the bootstrapped corrective action $a_h$ at step $t$, which we assume would also apply at $t'$.
Therefore, the longer the preference horizon, the larger $\delta_\text{pref}$, because the difference between the human actions $a_h$ in state $s$ and the predicted $\tilde{s}_L$ grows as $L$ increases.
In Fig.~\ref{fig:choiceofL}, we visualize the effects of $L$ on the performance of \ours. We prove Thm. \ref{theory} in Appendix \ref{theoryproof}.

\subsection{Implementation Details} \label{sec:impledetail}

\textbf{Tasks.}
As shown in Fig.~\ref{figure:interface}, we conduct experiments on control tasks and manipulation tasks with different observation and action spaces. For the control task, we consider the MetaDrive driving experiments~\citep{li2021metadrive}, where the agent must navigate towards the destination in heavy-traffic scenes without crashing into obstacles or other vehicles. The agent uses the sensory state vector $s \in \mathbb{R}^{259}$ as its observation and outputs a control signal $a = (a_0, a_1) \in [-1,1]^2$ representing the steering angle and the acceleration, respectively. We evaluate the agent's learned policy in a held-out test environment separate from the training environments.

For manipulation tasks, we consider the Table Wiping and Nut Assembly tasks from the Robosuite environment~\citep{zhu2020robosuite}. In the Table Wiping task, the robot arm must learn to wipe the whiteboard surface and clean all of the markings. The positions of these markings are randomized at the beginning of each episode. The states are $s \in \mathbb{R}^{34}$ and actions are $a \in \mathbb{R}^6$ (3 translations in the XYZ axes and 3 rotations around the XYZ axes). In the Nut Assembly task, the robot must grab a metal ring from a random initial pose and place it over a target cylinder at a fixed location. The states are $s \in \mathbb{R}^{51}$ and actions are $a \in \mathbb{R}^7$, where the additional dimension in the action space represents opening or closing the gripper. In both manipulation tasks, the simulated UR5e robot arm uses fixed-impedance operational-space control to achieve the commanded pose.

\textbf{Trajectory Prediction Model.} \label{impledetail:trajprediction} In \ours, we need to predict the future states $f(s, a_n, H) = (s, \tilde{s}_{t+1}, \cdots, \tilde{s}_{t+H})$ from the current state $s$. We implement $f$ by running an $H$-step simulator rollout from the current state $s$, repeatedly applying action $a_n$ to collect the sequence $(\tilde{s}_{t+1}, \cdots, \tilde{s}_{t+H})$. This $H$-step simulator rollout runs at up to 1,000 fps on a CPU.

In real-world tasks such as autonomous driving, simulator rollouts often deviate from reality because vehicle dynamics parameters are imperfect and other traffic participants behave unpredictably. To predict future motion with minimal overhead, prior work directly propagates the ego‐vehicle’s state through a physics model~\citep{lin2000vehicle, pepy2006reducing, kaempchen2009situation}. Following this approach, we use the kinematic bicycle model~\citep{polack2017kinematic} to simulate $H=10$ steps, assuming all other traffic participants remain stationary. Compared with the data-driven approaches~\citep{zyner2018recurrent, chi2023diffusion, liu2021multimodal}, this rule‐based predictor requires only forward integration of a single vehicle and produces short‐term trajectories whose accuracy closely matches simulator rollouts. This lightweight extrapolation method runs at about 3,000 fps on a CPU, enabling real-time prediction with minimal overhead. Our ablation studies confirm that replacing the simulator with our bicycle‐model predictions incurs negligible performance loss (Table~\ref{tab:ablation}, rows 9-10).

\section{Experiments}
\label{sec:exp}

\subsection{Experimental Setting}

\textbf{Neural Policies as Proxy Human Policies.}
Experiments with real human participants are time-consuming and exhibit high variability between trials. Following the prior works on interactive imitation learning~\citep{hejna2023contrastive, peng2021safe}, in addition to real-human experiments, we also incorporate neural policies in the training loop of \ours to approximate human policies in Table~\ref{tab:comparing-hl-methods-metadrive-fakehuman},  \ref{tab:comparing-hl-methods-wipe}, and \ref{tab:comparing-hl-methods-nut}. The neural experts are trained using PPO-Lagrangian~\citep{ray2019benchmarking} for 20 million environment steps. 

In MetaDrive, the neural expert uses the following takeover rule when training all baselines and our method \ours: if the predicted trajectory $\tau = f(s, a_n, H)$ contains any safety violation, such as crashes or going off the road, or the average speed is too slow, the expert takes control for the next $H$ steps. In RoboSuite, the neural expert intervenes when the cumulative reward over the predicted trajectory $\tau$ falls below a threshold $\epsilon$. We set $\epsilon = 1$ for the Table Wiping task and $\epsilon = 2$ for the Nut Assembly task.

In Table~\ref{tab:comparing-hl-methods-metadrive-realhuman}, we report experiments involving real humans in the MetaDrive safety benchmark. In Table~\ref{tab:comparing-hl-methods-metadrive-fakehuman}, Table~\ref{tab:comparing-hl-methods-wipe}, and Table~\ref{tab:comparing-hl-methods-nut}, we report experiments with the neural policy as the proxy human policy in the MetaDrive, Table Wiping, and Nut Assembly tasks, respectively. 

\paragraph{Evaluation Metrics.}\label{sec:evaluationmetrics}
In the Table Wiping task and Nut Assembly task, we report the \textit{success rate}, the ratio of episodes where the agent reaches the destination. In the MetaDrive safety benchmark, we also report the \textit{episodic return} and \textit{route completion rate} during evaluation. The route completion rate is the ratio of the agent's successfully traveled distance to the length of the complete route. 

We train each interactive imitation learning baseline five times using distinct random seeds. Then, we roll out 50 trajectories generated by each model in the held-out evaluation environment and average each evaluation metric as the model's performance. During the evaluation, no expert is involved. The standard deviation is provided. We fix $H = 10$ for all the interactive imitation learning baselines. In \ours, we fix $\beta = 0.1$, choose $L = 4$ for the MetaDrive benchmark and Table Wiping task, and set $L = 6$ for the Nut Assembly task. In Fig.~\ref{fig:choiceofL}, we show how the choice of $L$ affects the performance of \ours in the MetaDrive benchmark.

We also report the total number of human-involved transitions (\textit{human data usage}) and the \textit{overall intervention rate}, which is the ratio of human data usage to total data usage.
These show how much effort humans make to teach the agents. 

\textbf{Human Interfaces.}
Human subjects can take control through the Xbox Wireless Controller or the keyboard and monitor the training process by visualizing environments on the screen. The predicted trajectories are updated every $H = 10$ steps (one second), so that the human expert can intervene promptly before the agent causes any safety violations and undesired behaviors. 

\paragraph{Baselines.}
We test two imitation learning baselines: Behavior Cloning (BC) and GAIL~\citep{ho2016generative}, and two confidence-based IIL methods: Ensemble-DAgger~\citep{menda2019ensembledagger} and Thrifty-DAgger~\citep{hoque2021thriftydagger}. Four human-in-the-loop IIL methods that learn from active human involvement are tested: Intervention Weighted Regression (IWR)~\citep{mandlekar2020human}, Human-AI Copilot Optimization (HACO)~\citep{li2021efficient}, Expert Intervention Learning (EIL)~\citep{spencer2020learning}, and Proxy Value Propagation~\citep{peng2024learning}.

\begin{figure}[!t]
\centering
\vspace{-1em}
\includegraphics[width=0.8\textwidth]{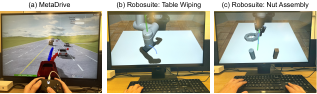}
\caption{Human interfaces of the three tasks: MetaDrive (a), Table Wiping (b), and Nut Assembly (c). In (a), the agent's forecasted trajectory (the red dots) leads to a collision, prompting the expert to intervene via the gamepad (blue dots show the predicted rollout of the expert). In (b) and (c), the expert observes the agent's forecasted trajectory and intervenes via the keyboard if necessary.
}
\label{figure:interface}
\vspace{-0.5em}
\end{figure}
\begin{figure}[!t]
\includegraphics[width=\textwidth]{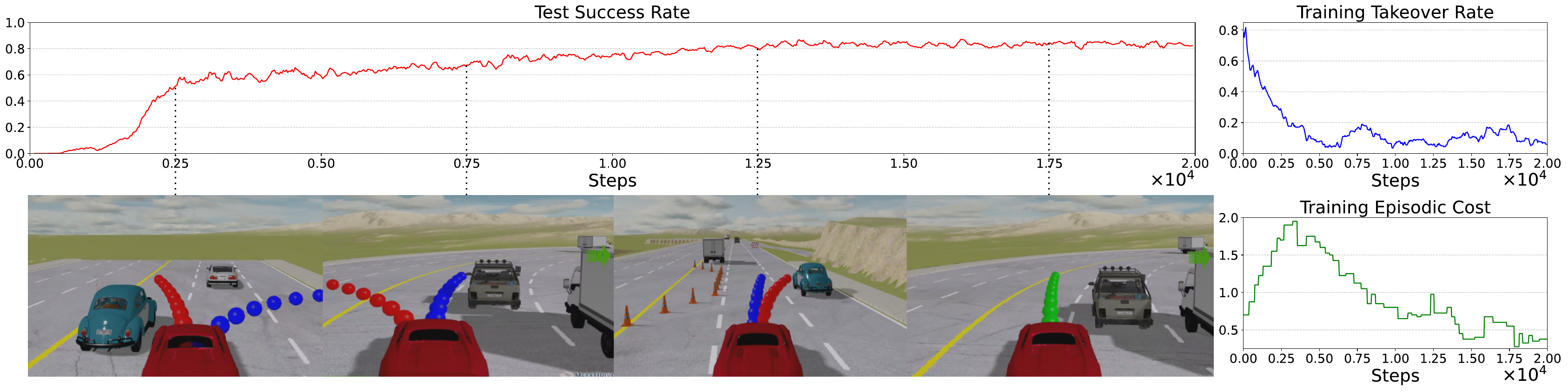}
\caption{Training process of PPL in the MetaDrive environment with the human expert over 20K steps. We plot the test success rate (left), training takeover rate (top right), and training episodic safety cost (bottom right). During training, when the agent’s forecasted trajectory (red dots) leads to a collision, the human expert intervenes via the gamepad, and the corrected rollout is shown (blue dots). When the agent’s forecasted trajectory is safe, it is visualized in green dots. The agent becomes autonomous and performant during training, requiring fewer human interventions to maintain safety.
 \vspace{-0.5em}
}
\label{figure:qfig}
\end{figure}

\subsection{Baseline Comparison} \label{sec:baselinecompare}

In Table~\ref{tab:comparing-hl-methods-metadrive-realhuman}, we report the performance of our \ours and all the baselines with real human experts in the MetaDrive safety benchmark. Our method \ours outperforms all the baselines and achieves a success rate of 76\% within 10K steps. The whole experiment of \ours takes only 12 minutes on a desktop computer with an Nvidia GeForce RTX 4080 GPU.

In Table~\ref{tab:comparing-hl-methods-metadrive-fakehuman}, \ref{tab:comparing-hl-methods-wipe}, and \ref{tab:comparing-hl-methods-nut}, we report the performance of our \ours and all the baselines with neural experts as proxy human policies in MetaDrive, Table Wiping, and Nut Assembly tasks, respectively.
We also plot the curves of the test-time success rate in Fig.~\ref{figure:main-result}. 
These tables and Fig.~\ref{figure:main-result} show that \ours achieves both fewer expert data usage and environment samples needed in both driving tasks and robot manipulation tasks while significantly
outperforming baselines in testing performance. These results suggest that our construction of the preference dataset accurately reflects human preferences  and helps speed up imitation learning. In addition, Fig.~\ref{figure:qualitative} shows that our method \ours produces smoother control sequences and generates trajectories that better align with human preferences.

\begin{table*}[!t]
\centering
\caption{Comparison of methods with training/testing statistics in the MetaDrive environment with the real human expert. The overall intervention rate is given together with the human data usage.} 
\label{tab:comparing-hl-methods-metadrive-realhuman}
\resizebox{\textwidth}{!}{%
\begin{tabular}{@{}cccccccc@{}}
\toprule
\multirow{2}{*}{\textbf{Method}} & \multirow{2}{*}{
\textbf{\shortstack{Human-in-\\the-Loop}}
} &
  \multicolumn{2}{c}{\textbf{Training}} & 
  \multicolumn{3}{c}{\textbf{Testing}} \\
\cmidrule(lr){3-4} \cmidrule(lr){5-7}
 & & \textbf{Human Data Usage} & \textbf{Total Data Usage} & \textbf{Success Rate} & \textbf{Episodic Return} & \textbf{Route Completion} \\
\midrule 
Human Expert & -- & 20K & -- 
  & {0.95 \tiny $\pm$ 0.04} 
  & {349.2 \tiny $\pm$ 18.2} 
  & {0.98 \tiny $\pm$ 0.01} \\
BC & \xmark & 20K & -- & 0.0 \tiny $\pm$ 0.0 & 53.5 \tiny $\pm$ 22.8 &  0.16 \tiny $\pm$ 0.07 \\
GAIL & \xmark & 20K & 1M & 0.14 \tiny $\pm$ 0.03 & 146.2 \tiny $\pm$ 17.1 &  0.44 \tiny $\pm$ 0.05 \\
\midrule
Ensemble-DAgger & \cmark & 3.8K (0.38) & 10K 
  & {0.36 \tiny $\pm$ 0.11} 
  & {233.8 \tiny $\pm$ 21.3} 
  & {0.70 \tiny $\pm$ 0.02} \\
Thrifty-DAgger & \cmark & 3.2K (0.32) & 10K 
  & {0.45 \tiny $\pm$ 0.04} 
  & {221.5 \tiny $\pm$ 26.4} 
  & {0.62 \tiny $\pm$ 0.04} \\
\midrule
PVP & \cmark & 4.9K (0.49) & 10K 
  & {0.46 \tiny $\pm$ 0.08} 
  & {267.3 \tiny $\pm$ 15.0} 
  & {0.71 \tiny $\pm$ 0.04} \\
IWR & \cmark & 5.2K (0.52) & 10K 
  & {0.23 \tiny $\pm$ 0.10} 
  & {246.7 \tiny $\pm$ 10.7} 
  & {0.62 \tiny $\pm$ 0.02} \\
EIL & \cmark & 6.9K (0.69) & 10K 
  & {0.01 \tiny $\pm$ 0.01} 
  & {137.3 \tiny $\pm$ 26.1} 
  & {0.40 \tiny $\pm$ 0.08} \\
HACO & \cmark & 6.3K (0.63) & 10K 
  & {0.11 \tiny $\pm$ 0.05} 
  & {154.7 \tiny $\pm$ 14.7} 
  & {0.45 \tiny $\pm$ 0.09} \\
\ours (Ours) & \cmark & \textbf{2.9K} (0.29) & 10K 
  & {\textbf{0.76} \tiny $\pm$ 0.07} 
  & {\textbf{324.8} \tiny $\pm$ 9.2} 
  & {\textbf{0.90} \tiny $\pm$ 0.06} \\
\bottomrule
\end{tabular}
}
\end{table*}

\subsection{Ablation Studies}

In Table~\ref{tab:ablation}, we perform ablation studies of our \ours in the MetaDrive safety benchmark with the neural expert as proxy human policies.

\begin{table*}[!t]
\begin{minipage}[!b]{0.5\textwidth}
\centering
\begin{scriptsize}
\caption{
Ablation studies in MetaDrive with 10K total data usage. We use the neural expert as the proxy human policy.
}
\label{tab:ablation}
\begin{tabular}{@{}cccc@{}}
\toprule 
Method	& \shortstack{Expert Data \\Usage}  & \shortstack{Route\\ Completion} & \shortstack{Success\\ Rate}\\
\toprule
Imitation on $a^+$ & 1.9K & 0.65 & 0.36 \\
\ours with random $a^+$ & 2.2K & 0.73 & 0.45 \\
\ours with random $a^-$ & 2.3K & 0.69 & 0.38 \\
\midrule
\ours with DPO & \textbf{1.6K} & \textbf{0.91} & \textbf{0.80} \\ 
\ours with IPO & 2.6K &  0.61  & 0.35 \\
\ours with SLiC-HF & 3.0K & 0.59 & 0.32 \\
\midrule
\ours with BC loss only & 2.0K & 0.72 & 0.42 \\
\ours with CPO loss only & 5.8K & 0.31 & 0.04 \\ 
\midrule
\ours with rule-based $f$ & \textbf{1.9K} &  \textbf{0.91}  & \textbf{0.78} \\
\ours (Ours) & \textbf{1.8K} & \textbf{0.92} & \textbf{0.81} \\
\bottomrule
\end{tabular}%
\end{scriptsize}
\end{minipage}
\hfill
\begin{minipage}[!b]{0.48\textwidth}
\includegraphics[width=\textwidth]{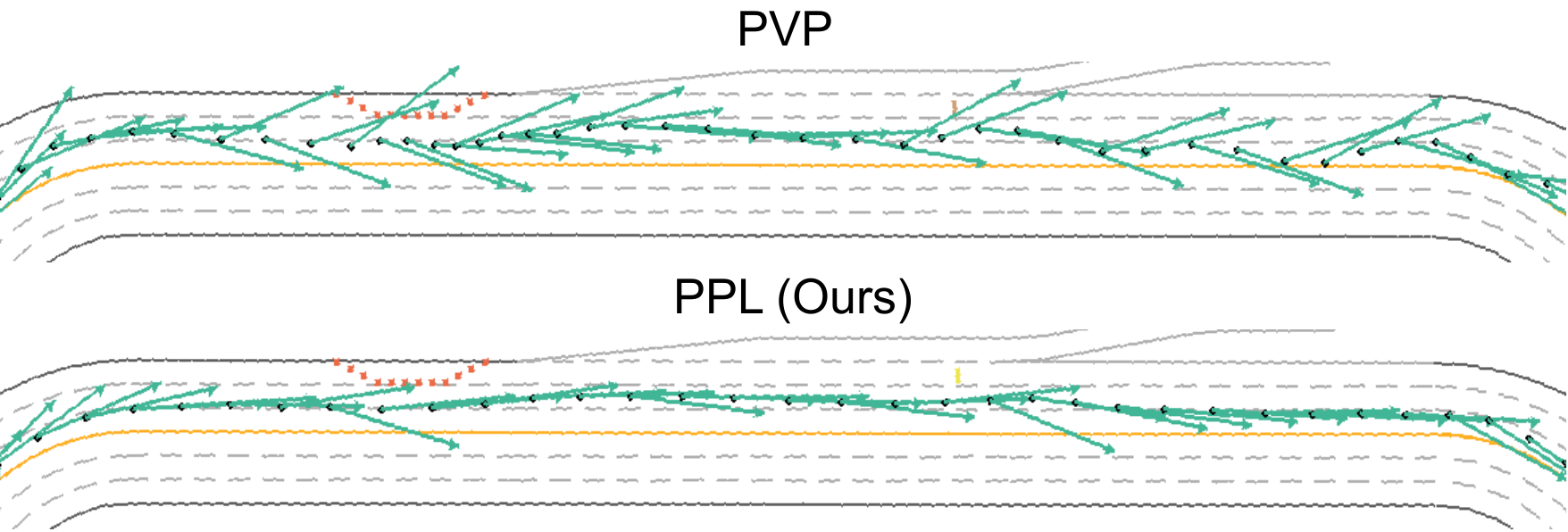}
\captionof{figure}{We plot the steering control sequences for both PVP and \ours on the same MetaDrive map, with arrows representing the steering angles every five steps. Both agents are trained to 10K steps. Compared to PVP, our method yields smoother steering and more consistent speeds, especially when navigating close to obstacles.
}
\label{figure:qualitative}
\end{minipage}
\vspace{-1.5em}
\end{table*}

\textbf{Discarding positive or negative actions:} In the first three rows of Table~\ref{tab:ablation}, we show that the advantage of our method \ours arises from the constructed preference pairs $(\tilde{s}, a^+, a^-)$ in the preference data $\Dpref$ (Fig.~\ref{fig:method} (right)), instead of merely emulating the positive actions $a^+$ or simply avoiding taking the negative actions $a^-$ in the preference buffer. As shown in Table~\ref{tab:ablation}, discarding the negative actions $a^-$ and performing Behavior Cloning on the positive actions (Imitation on $a^+$) leads to poor performance, which is even worse than directly imitating the expert demonstrations in the human buffer $\mathcal{D}_h$ (\ours with BC loss only). In addition, replacing the positive actions by random actions (\ours with random $a^+$) or the negative actions by random actions (\ours with random $a^-$) also fails to solve the MetaDrive benchmark.

\textbf{Preference-based RL objectives:} In our learning objective Eq. \ref{loss_final}, we use the Contrastive Preference Optimization (CPO) loss~\citep{xu2024contrastive} to learn from the preference dataset $\Dpref$. In Table~\ref{tab:ablation} (rows 4–6), we also report the performance of using other preference-based RL objectives from Direct Preference Optimization (DPO)~\citep{rafailov2023direct}, IPO~\citep{azar2024general}, and SLiC-HF~\citep{zhao2023slic}. For DPO and IPO, we use a reference policy trained by Behavior Cloning from 10K expert demonstrations. Table~\ref{tab:ablation} shows that using IPO (\ours with IPO) and SLiC-HF (\ours with SLiC-HF) objectives degrade the performance of \ours. Using the DPO objective (\ours with DPO) does not hurt the performance of \ours. However, the DPO objective requires access to a pretrained reference policy, while our learning objective Eq. \ref{loss_final} does not. 

\textbf{Discarding the BC loss or preference loss:} As shown in row 7 of Table~\ref{tab:ablation}, discarding the CPO loss $\mathcal{L}_\text{pref}$ in Eq. \ref{loss_final} (\ours with BC loss only) significantly damages the performance of \ours. Discarding the BC loss (\ours with CPO loss only) also damages the performance, because the BC loss helps regularize our learned policy and avoid it deviating too much from the expert demonstrations.

\textbf{Rule-based trajectory prediction model:} Following Sec. \ref{sec:impledetail}, we also implement a rule-based trajectory prediction model $f$ by simulating the ego-vehicle dynamics for $H$ steps. Using a rule-based $f$ (\ours with rule-based $f$) has negligible effects on the performance of \ours. This shows that our method still outperforms the IIL baselines even without relying on simulator rollouts.

\subsection{Robustness Analysis} \label{sec:robust}

In Sec.~\ref{sec:robust}, we evaluate \ours's robustness to noise in the trajectory predictor (Fig.~\ref{fig:robustpred}).
We also visualize the effect of the preference horizon $L$ on \ours in Fig.~\ref{fig:choiceofL}.

\begin{table*}[!t]
\begin{minipage}[!b]{0.48\textwidth}
\includegraphics[width=\textwidth]{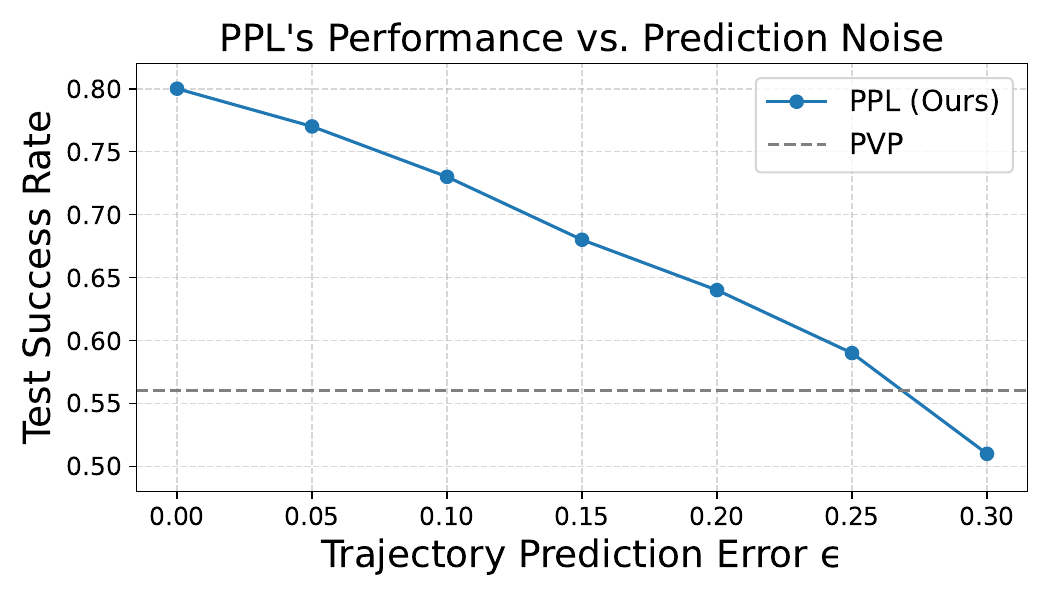}
\captionof{figure}{Performance of \ours under varying trajectory-prediction noise levels $\epsilon$ in MetaDrive. \ours still outperforms PVP when the trajectory predictor is imperfect.
}
\label{fig:robustpred}
\end{minipage}
\hfill
\begin{minipage}[!b]{0.48\textwidth}
\includegraphics[width=\textwidth]{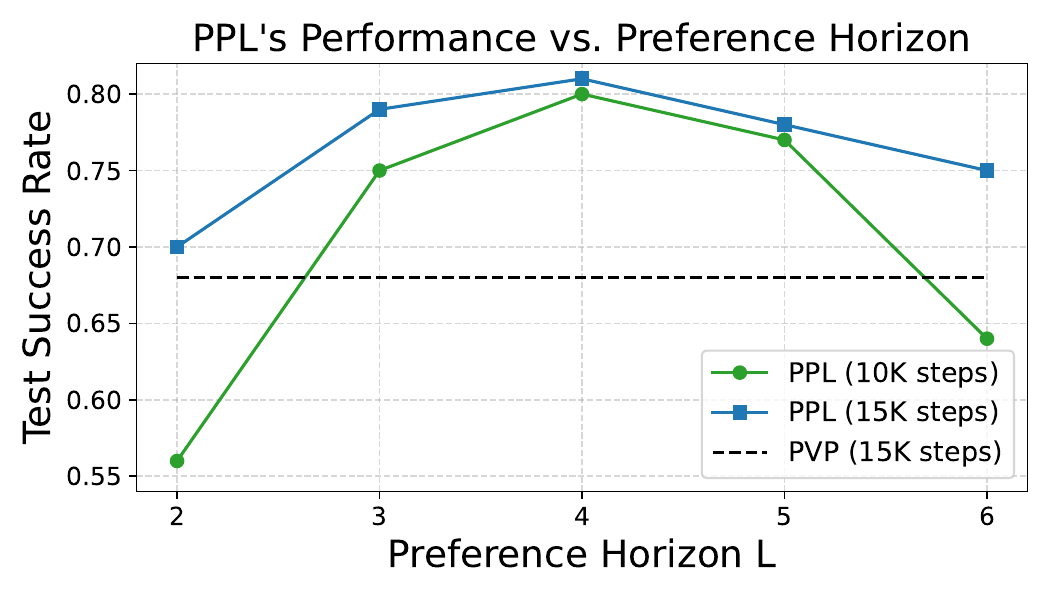}
\captionof{figure}{Performance of \ours with different preference horizons $L$ in MetaDrive with 10K and 15K total data usage. \ours has the best learning efficiency when we set $L = 4$ in MetaDrive.
}
\label{fig:choiceofL}
\end{minipage}
 \vspace{-1.5em}
\end{table*}

In Fig.~\ref{fig:robustpred}, we show that PPL is robust to noise in trajectory predictors. With an imperfect predictive model, PPL still outperforms all the baselines. We inject random Gaussian noise $e_\text{noise}$ to the outputs $\tilde{s}$ of the trajectory predictor, and we set the norm $||e_\text{noise}||_2 = \epsilon * ||\tilde{s}||_2$. Then we gradually increase the constant $\epsilon$ to test PPL's robustness to noises in trajectory predictors. We use MetaDrive, Table Wiping, and Nut Assembly environments following the same setups from Tables \ref{tab:comparing-hl-methods-metadrive-fakehuman}, \ref{tab:comparing-hl-methods-wipe}, and \ref{tab:comparing-hl-methods-nut}, respectively. We find that with a noisy predictive model, PPL still outperforms all the baselines in MetaDrive and Table Wiping when the noise $\epsilon \leq 0.25$. In Nut Assembly, PPL outperforms the baselines when $\epsilon \leq 0.125$.

In Fig.~\ref{fig:choiceofL}, we visualize how the preference horizon $L$ affects the test success rate of \ours in the MetaDrive safety benchmark with 10K and 15K total data usage. As $L$ increases from 2 to 4, the agent gains additional corrective information from forecasted states in the preference buffer and achieves higher success rates. Beyond $L = 4$, however, the benefit tapers off and eventually degrades, since overly long horizons yield less accurate preference labels. Therefore, we observe peak learning efficiency at $L = 4$.
Notably, when $3\le L\le5$, \ours trained for only 10K steps already outperforms PVP trained for 15K steps. With an appropriately chosen preference horizon, \ours can substantially reduce both training time and expert monitoring effort.

\section{Conclusion} \label{sec:conclusion}

In this work, we propose Predictive Preference Learning from Human Interventions (\ours), a novel interactive imitation learning algorithm that applies preference learning over predicted future trajectories to capture implicit human preferences. By converting each expert intervention into contrastive preference labels across forecasted states, \ours directs corrective feedback toward the regions of the state space the agent is most likely to explore. This approach substantially improves learning efficiency and reduces both the number of required demonstrations and the expert’s cognitive load, without offline pretraining and reward engineering.
\paragraph{Limitations.} 
We assume that the expert always knows the optimal corrective action and demonstrates it accurately, whereas human demonstrations can be suboptimal or inconsistent. Additionally, all our experiments are conducted in simulation. The effectiveness and safety of \ours on real robots operating in physical environments remain to be explored in future works. \hfill

\textbf{Acknowledgment}: This work was supported by the NSF Grants CCF-2344955 and IIS-2339769, and ONR
grant N000142512166. The human experiment in this study is approved through the IRB\#23-000116 at UCLA. ZP was supported by the Amazon Fellowship via UCLA Science Hub.

{
\small
\bibliographystyle{plain}
\bibliography{cite}
}



\newpage

\appendix

\section{Algorithm}
We summarize our method \ours in Alg. \ref{ouralgo}.

\begin{algorithm}
\caption{Predictive Preference Learning from Human Interventions (\ours)}\label{ouralgo}
\begin{algorithmic}[1]
\STATE {\bfseries Input:} Hyperparameters $H, L, \beta$.
\FOR {timestep $k = 0, H, 2H, \ldots$}
       \STATE Agent samples action $a_n \sim \pi_n(s_k)$.
       \STATE Predict future trajectory $\tau = f(s_k, a_n, H) = (s_k, \tilde{s}_{k+1}, \cdots, \tilde{s}_{k+H})$.
       \STATE Human observes $\tau$ to decide whether to take over in the next $H$ steps. \label{takeover}
       
       \FOR {timestep $t = k, k+1, \cdots, k+H-1$}

       \IF {Human takes over}
       
       \STATE Human takes action $a_h \sim \pi_h(s_t)$.
       \STATE Add $(s_t, a_h)$ to the human buffer $\mathcal{D}_h$.
       \STATE Agent samples action $a_n \sim \pi_n(s_t)$.
       \STATE Predict future trajectory $\tau' = f(s_t, a_n, L) = (s_t, \tilde{s}_{t+1}, \cdots, \tilde{s}_{t+L})$.
       \STATE Add $(\tilde{s}, a_h, a_n)$ to the preference dataset $\Dpref$ for each $\tilde{s} $ in $ (s_t, \tilde{s}_{t+1}, \cdots, \tilde{s}_{t+L})$. \label{addprefdata}
       \STATE Observe $s_{t+1}  \sim \mathcal{P}(\cdot \mid s_t, a_h)$.
       \ELSE
       \STATE Agent samples action $a_n \sim \pi_n(s_t)$.
       \STATE Observe $s_{t+1}  \sim \mathcal{P}(\cdot \mid s_t, a_n)$.
       \ENDIF
       \STATE Train policy $\pi_n$ with loss function Eq. \ref{loss_final}.
       \ENDFOR
\ENDFOR
\STATE {\bfseries Output:} Policy $\pi_n$.
\end{algorithmic}
\end{algorithm}
\vspace{-1.5em}

\section{Additional Experimental Results}\label{append:addexp}

We report the performance of our \ours and all the baselines with neural experts as proxy human policies in MetaDrive (Table~\ref{tab:comparing-hl-methods-metadrive-fakehuman}), Table Wiping (Table~\ref{tab:comparing-hl-methods-wipe}), and Nut Assembly (Table~\ref{tab:comparing-hl-methods-nut}) tasks, respectively. The test success rate curves of all three tasks are shown in Fig.~\ref{figure:main-result}.

\begin{table*}[ht]
\centering
\caption{Comparison of methods with training/testing statistics in the MetaDrive environment with the neural expert as the proxy human policy. The overall intervention rate is given together with the expert data usage.} 
\label{tab:comparing-hl-methods-metadrive-fakehuman}
\resizebox{\textwidth}{!}{%
\begin{tabular}{@{}cccccccc@{}}
\toprule
\multirow{2}{*}{\textbf{Method}} & \multirow{2}{*}{\textbf{Expert-in-the-Loop}} &
 \multicolumn{2}{c}{\textbf{Training}} & 
 \multicolumn{3}{c}{\textbf{Testing}} \\
\cmidrule(lr){3-4} \cmidrule(lr){5-7}
 & & \textbf{Expert Data Usage} & \textbf{Total Data Usage} & \textbf{Success Rate} & \textbf{Episodic Return} & \textbf{Route Completion} \\
\midrule 
Neural Expert &-- & -- & --& 0.83 {\tiny $\pm$ 0.07} & 340.2 {\tiny $\pm$ 15.9} & 0.93 {\tiny $\pm$ 0.02}\\ 
BC & \xmark & 20K &  -- & 0.12 {\tiny $\pm$ 0.04} & 142.7 \tiny $\pm$ 27.5 & 0.46 \tiny $\pm$ 0.07  \\ 
GAIL & \xmark & 20K & 1M & 0.34 \tiny $\pm$ 0.08 & 196.5 {\tiny $\pm$ 14.1} & 0.60 \tiny $\pm$ 0.09 \\ 
\midrule
Ensemble-DAgger & \cmark & 3.2K (0.32) & 10K 
  & {0.41 \tiny $\pm$ 0.08} 
  & {238.6 \tiny $\pm$ 13.0} 
  & {0.69 \tiny $\pm$ 0.07} \\
Thrifty-DAgger & \cmark & 2.9K (0.29) & 10K 
  & {0.49 \tiny $\pm$ 0.07} 
  & {248.2 \tiny $\pm$ 27.8} 
  & {0.75 \tiny $\pm$ 0.06} \\
\midrule
PVP & \cmark & 2.5K (0.25)
& 10K & 0.56 {\tiny $\pm$ 0.07} & 258.1 {\tiny $\pm$ 23.4} & 0.76 {\tiny $\pm$ 0.05} \\ 
IWR & \cmark & 2.7K (0.27) 
& 10K & 0.33 {\tiny $\pm$ 0.11} & 217.0 {\tiny $\pm$ 20.9} & 0.67 {\tiny $\pm$ 0.06} \\ 
EIL & \cmark & 3.9K (0.39) 
& 10K & 0.11 {\tiny $\pm$ 0.06} & 131.8 {\tiny $\pm$ 29.5} & 0.42 {\tiny $\pm$ 0.11} \\ 
HACO& \cmark & 2.6K (0.26) & 10K & 0.36 {\tiny $\pm$ 0.15} & 210.2 {\tiny $\pm$ 25.2} & 0.64 {\tiny $\pm$ 0.10} \\
\ours (Ours) & \cmark & \textbf{1.2K} (0.20) & \textbf{6K} &  \textbf{0.80 {\tiny $\pm$ 0.04}} &  \textbf{329.9 {\tiny $\pm$ 13.4}} &  \textbf{0.92 {\tiny $\pm$ 0.03} }\\ \bottomrule
\end{tabular}
}
\end{table*}

\begin{table*}[ht]
\centering
\begin{minipage}[!b]{0.47\linewidth}
\begin{small}
\caption{Results of different approaches in Table Wiping.} 
\label{tab:comparing-hl-methods-wipe}
{%
\begin{tiny}
\begin{tabular}{cccc}
\toprule 
Method	&
 {Expert Data Usage}
 & 
 {Total Data} & {Success Rate}\\
\midrule 
Neural Expert & -- & --&  0.84  \\
BC   & 10K & --  & 0.11  \\ 
GAIL   & 10K & 1M  & 0.37 \\ 
\midrule
PVP   &  2.3K  &  4K &  0.58 \\ 
IWR & 2.5K  &  4K & 0.51  \\ 
EIL & 2.4K & 4K & 0.53 \\
HACO&  2.9K  & 4K &  0.48 \\
\ours (Ours) & \textbf{0.2K} & \textbf{2K} & \textbf{0.80}  \\ 
\bottomrule
\end{tabular}
\end{tiny}
}
\end{small}
\end{minipage}
\hfill
\begin{minipage}[!b]{0.47\linewidth}
\begin{small}
\caption{Results of different approaches in Nut Assembly.} 
\label{tab:comparing-hl-methods-nut}
{%
\begin{tiny}
\begin{tabular}{cccc}
\toprule 
Method	&
 {Expert Data Usage}
 & 
 {Total Data} & {Success Rate}\\
\midrule 
Neural Expert & -- & -- & 0.60 \\
BC   & 100K & --  & 0.02   \\ 
GAIL   & 100K & 1M & 0.08  \\ 
\midrule
PVP   & 49K  
& 200K  &  0.35 \\ 
IWR &  54K 
& 200K &  0.29 \\ 
EIL  & 48K  
& 200K  &  0.25  \\ 
HACO& 77K  
&  200K & 0.15 \\ 
\ours (Ours) & \textbf{48K}
& \textbf{200K}  &  \textbf{0.51} \\ 
\bottomrule
\end{tabular}
\end{tiny}
}
\end{small}
\end{minipage}
\end{table*}

The neural experts in Table~\ref{tab:comparing-hl-methods-metadrive-fakehuman}, \ref{tab:comparing-hl-methods-wipe}, and \ref{tab:comparing-hl-methods-nut} are trained with PPO-Lagrangian~\citep{ray2019benchmarking} for 20M environment steps, yet their test success rates still fall short of 100\% for the following reasons. The MetaDrive safety environments occasionally generate rare but challenging scenarios that even a well-trained policy may fail to handle. In the Table Wiping task, the neural expert sometimes fails to remove one or two markings on the whiteboard, leaving a small patch of dirt uncleaned. In the Nut Assembly task, successful grasping requires the gripper to be precisely aligned with the metal ring’s handle, which is highly sensitive to even minor action errors.

\section{Demo Video}
We have attached our demo video to \url{https://metadriverse.github.io/ppl}. This video shows how we conduct human experiments and the evaluation results of our method Predictive Preference Learning from Human Interventions (PPL). This video includes five sections:
\begin{enumerate}
\item The first section gives an overview of Predictive Preference Learning, showing what human observes on the screen and how human provides corrective demonstrations in an episode.
\item The second section is the footage of the MetaDrive human experiment, where the human expert interacts with the driving agent via a gamepad. 
\item The third section shows the evaluation results of the PPL agent in a held-out MetaDrive test environment. We compare our approach PPL with the PVP baseline~\citep{peng2024learning}, and both agents are trained to 10K timesteps. The evaluation results show that our approach PPL has a higher test success rate and lower safety cost.
\item The fourth section shows the applicability of our methods to manipulation tasks in Robosuite~\citep{zhu2020robosuite}: Table Wiping and Nut Assembly. PPL successfully imitates the expert and accomplishes both tasks in evaluation environments.
\item In the fifth section, we provide a full training session on MetaDrive. The video is played at 5× speed, and it shows how a human expert trains a PPL agent on MetaDrive in under 12 minutes, with approximately 1.8K demonstration steps and 10K environment steps.
\end{enumerate}

\section{Human Subject Research Protocol}
\paragraph{Human Participants.}
 Five university students (ages 20–30) with valid driver’s licenses and video gaming experience took part in the study voluntarily. After receiving a detailed overview of the procedures and providing written informed consent under an IRB-approved protocol, each participant completed a hands-on familiarization session. During this session, they were informed how the predicted trajectories were shown on screen, and they practiced with our control interface and learning environments until performing ten consecutive successful runs before the main experiments. 
 
\paragraph{Main Experiment.}
 Each participant began with one or two fully manual episodes to build confidence, and then ceded control to the agent when they felt safe. Participants were instructed to intervene only when the agent’s predicted trajectory appeared unsafe, illegal, or inconsistent with their desired actions. They were directed to prioritize safe task completion and then to guide the agent toward their personal driving or manipulation preferences.
 
\section{Notations}

Before we prove Theorem \ref{theory}, we recall all the notations in this work. We denote the human policy as $\pi_h$ and the novice policy as $\pi_n$. For any stochastic policy $\pi(a \mid s)$ and the initial state distribution $d_0$ on state space $\mathcal{S}$, we define the value function $J(\pi)$ as the expected cumulative return:
$
J(\pi) = \expect\limits_{\tau\sim P_{\pi}}[
\sum\limits_{t=0}^{\infty} \gamma^{t} r(s_t, a_t)],
$
wherein $\tau = (s_0, a_0, s_1, a_1, ...)$ is the trajectory sampled from trajectory distribution $P_{\pi}$ induced by $\pi$, $d_0$ and the state transition distribution $\mathcal P$. We also denote the Q-function of policy $\pi$ as $Q(s, a) = \expect\limits_{\tau\sim P_{\pi}}[
\sum\limits_{t=0}^{\infty} \gamma^{t} r(s_t, a_t) \big | s_0 = s, a_0 = a].$
And we define the discounted state distribution under $\pi$ as $d_{\pi}(s) = (1 - \gamma) \expect\limits_{\tau\sim P_{\pi}}[ \sum\limits_{t=0}^{\infty} \gamma^t \mathbb{I}[s_t = s]].$

In our algorithm PPL, we have a preference dataset $\Dpref$ containing preference pairs $(s, a^+, a^-)$. The preference loss function of policy $\pi$ in PPL is defined as
\begin{equation}
    \mathcal{L}_{\text{pref}}(\pi) =  {|\Dpref|}^{-1} \sum\limits_{(s', a^+ ,a^-) \in \Dpref} \left[- \log \sigma \left( \beta \log \pi (a^+ \mid s) - \beta \log \pi(a^- \mid s) \right) \right],
\end{equation}
 where $\beta$ is a positive constant and $\sigma(x) = (1 + \exp(-x))^{-1}$ is the Sigmoid function.

We denote the state distribution in $\Dpref$ as 
\begin{equation}
d_\text{pref}(s) =  {|\Dpref|}^{-1} \sum\limits_{(s', a^+ ,a^-) \in \Dpref} \mathbb{I}[s' = s].
\end{equation}
In addition, for any state $s$ in $\Dpref$ with $d_{\text{pref}}(s)>0$, we denote the empirical preference-pair distribution in state $s$ as
\begin{equation}
\rho_{\text{pref}}^s(a_h,a_n)
= \frac{\sum\limits_{(s', a^+ ,a^-) \in \Dpref} \mathbb{I}[s' = s, a^+ = a_h, a^- = a_n] }{\sum\limits_{(s', a^+ ,a^-) \in \Dpref} \mathbb{I}[s' = s]},
\end{equation}
which is a distribution on $\mathcal{A} \times \mathcal{A}.$

\section{Proof of Theorem \ref{theory}}\label{theoryproof}

Our goal is to prove that the performance gap $J(\pi_h) - J(\pi_n)$ between the human policy $\pi_h$ and the agent policy $\pi_n$ can be bounded by the following three error terms: the state distribution shift $\delta_\mathrm{dist}$, the quality of preference labels $\delta_\mathrm{pref}$, and the optimization error $\epsilon$. We denote the total variation for any two distributions $P, Q$ on the same space as $D_{\mathrm{TV}}(P, Q)=\frac{1}{2}\|P-Q\|_1$.

Here, we formally define the three error terms. The first state distribution shift error arises from the difference between the distribution of states in the preference dataset $\Dpref$ (denoted as $d_\text{pref}(s)$) and the discounted state distribution of the agent's policy $\pi_n$ (denoted as $d_{\pi_n}(s)$). To define the distribution shift error $\delta_\mathrm{dist}$ in PPL, we use the total variation between the two distributions, i.e.,
\begin{equation}
    \delta_\text{dist} = D_\text{TV}(d_{\pi_n}, d_{\text{pref}}).
\end{equation}

The second error term arises from the misalignment of the positive actions in the preference dataset, as the human action $a_h$ in each tuple $(\tilde{s}_i, a_h, a_n) \in \Dpref$ is sampled in state $s$ instead of the predicted future state $\tilde{s}_i$. In an ideal preference dataset, one would observe expert and novice actions drawn directly at
$\tilde{s}_i$.
To quantify this error, we define the following distribution $\rho_\text{ideal}^s(a_h, a_n) = \pi_h(a_h \mid s) \pi_n(a_n \mid s)$ on $\mathcal{A} \times \mathcal{A}$ for any state $s$, i.e., the distribution over pairs $(a_h ,a_n)$ if both policies were sampled at directly at state $s$. Then we use 
\begin{equation}
\delta_\text{pref} = \expect\limits_{s \sim d_{\text{pref}}} D_\text{TV}(\rho_\text{ideal}^s, \rho_\text{pref}^s)
\end{equation}
to define the errors in the preference dataset. 

Finally, we define the optimization error of the agent policy $\pi_n$ as 
\begin{equation}
    \epsilon = \mathcal{L}_{\text{pref}}(\pi_n) - \mathcal{L}_{\text{pref}}(\pi_h).
\end{equation}

Under these notations, we have the following Thm. \ref{theoryformal}. We note that when we choose a small $\beta \leq M^{-2}$ ($M$ is defined in Thm. \ref{theoryformal}), we have  
\begin{equation}
\begin{aligned}
J(\pi_h) - J(\pi_n) & = \frac{1}{1 - \gamma} \cdot O \Big( \sqrt{\frac{\epsilon + 4 \log 2 \cdot \delta_{\mathrm{pref}} }{2\beta}} + 2 \delta_{\mathrm{dist}} \Big).
\end{aligned}
\end{equation}

\begin{theorem}[Formal Statement of Theorem \ref{theory}] \label{theoryformal}
We denote the Q-function of the human policy $\pi_h$ as $Q^*(s, a)$. We assume that for any $(s, a, a')$, $  \left| Q^*(s, a) - Q^*(s, a') \right| \leq U$, $ |\log \pi_h (a | s) - \log \pi_h(a' | s) | \leq M$, and $|\log \pi_n (a | s) - \log \pi_n(a'|s)  | \leq M$, where $U, M $ are positive constants. 
Then, we have
\begin{equation}\label{eq:thm}
    \begin{aligned}
J(\pi_h) - J(\pi_n) &\leq \frac{U}{1 - \gamma} \cdot \Big( \sqrt{\frac{\epsilon + 4 (\beta M + \log 2) \cdot \delta_{\mathrm{pref}} }{2\beta} + \frac{\beta M^2}{8}} + 2 \delta_{\mathrm{dist}} \Big).
    \end{aligned}
    \end{equation}

\end{theorem}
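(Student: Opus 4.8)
The plan is to bound the performance gap $J(\pi_h) - J(\pi_n)$ via a performance-difference argument and then control the resulting policy discrepancy using the preference loss. First I would invoke the standard performance difference lemma, which expresses $J(\pi_h) - J(\pi_n) = \frac{1}{1-\gamma}\expect_{s \sim d_{\pi_n}}\expect_{a \sim \pi_h(\cdot \mid s)}[A^{\pi_n}(s,a)]$ or a symmetric variant in terms of $Q^* = Q^{\pi_h}$. Using the bound $|Q^*(s,a) - Q^*(s,a')| \leq U$, the per-state contribution can be controlled by $U$ times a total-variation distance between $\pi_h(\cdot \mid s)$ and $\pi_n(\cdot \mid s)$, giving $J(\pi_h) - J(\pi_n) \leq \frac{U}{1-\gamma}\expect_{s \sim d_{\pi_n}} D_{\mathrm{TV}}(\pi_h(\cdot \mid s), \pi_n(\cdot \mid s))$. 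This explains the prefactor $\frac{U}{1-\gamma}$ in Eq.~\eqref{eq:thm}.

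Next I would handle the change of measure from $d_{\pi_n}$ to $d_{\text{pref}}$. Since the preference loss is evaluated under $d_{\text{pref}}$ but the performance gap lives under $d_{\pi_n}$, I would add and subtract, writing $\expect_{s \sim d_{\pi_n}}[\cdot] = \expect_{s \sim d_{\text{pref}}}[\cdot] + (\expect_{s \sim d_{\pi_n}} - \expect_{s \sim d_{\text{pref}}})[\cdot]$. The second term is bounded by $\|d_{\pi_n} - d_{\text{pref}}\|_1$ times the sup-norm of the integrand; since $D_{\mathrm{TV}}(\pi_h, \pi_n) \leq 1$, this yields the $2\delta_{\mathrm{dist}}$ contribution (the factor $2$ coming from $\|P-Q\|_1 = 2 D_{\mathrm{TV}}(P,Q)$). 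This isolates the distribution-shift error cleanly and reduces the problem to bounding $\expect_{s \sim d_{\text{pref}}} D_{\mathrm{TV}}(\pi_h(\cdot \mid s), \pi_n(\cdot \mid s))$ in terms of $\epsilon$ and $\delta_{\mathrm{pref}}$.

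The central step is to connect the averaged TV distance $\expect_{s \sim d_{\text{pref}}} D_{\mathrm{TV}}(\pi_h, \pi_n)$ to the preference loss gap $\epsilon = \mathcal{L}_{\text{pref}}(\pi_n) - \mathcal{L}_{\text{pref}}(\pi_h)$. Here the small-$\beta$ assumption is crucial: I would Taylor-expand $-\log\sigma(\beta z)$ around $\beta = 0$, where $z = \log\pi(a^+\mid s) - \log\pi(a^-\mid s)$ is bounded in magnitude by $M$. The zeroth order gives the constant $\log 2$; the first-order term is linear in $z$ with coefficient $\beta/2$; and the quadratic remainder contributes the $\frac{\beta M^2}{8}$ term via the curvature bound on $-\log\sigma$. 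Under $\rho_{\text{pref}}^s$ the linear term relates to the log-likelihood differences the policy assigns, so that $\mathcal{L}_{\text{pref}}(\pi_n) - \mathcal{L}_{\text{pref}}(\pi_h)$ controls, up to the $\frac{\beta M^2}{8}$ curvature slack, a quantity measuring how much $\pi_n$ prefers $a^+$ over $a^-$ relative to $\pi_h$. I would then replace $\rho_{\text{pref}}^s$ by the ideal distribution $\rho_{\text{ideal}}^s = \pi_h \otimes \pi_n$, paying $\delta_{\mathrm{pref}}$ with a Lipschitz constant $(\beta M + \log 2)$ in the integrand — this is where the $4(\beta M + \log 2)\delta_{\mathrm{pref}}$ term enters. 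Under the ideal distribution, the resulting expression is (up to sign and the $\frac{1}{2\beta}$ scaling) a squared-Hellinger or $\chi^2$-type gap between $\pi_h$ and $\pi_n$, which lower-bounds $D_{\mathrm{TV}}(\pi_h,\pi_n)^2$ by Pinsker or a Cauchy–Schwarz/Jensen argument; taking a square root produces the $\sqrt{(\epsilon + 4(\beta M + \log 2)\delta_{\mathrm{pref}})/(2\beta) + \beta M^2/8}$ factor after a final application of Jensen to pull the square root outside the expectation over $s$.

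I expect the main obstacle to be the quadratic step: turning the \emph{linear}-in-log-probability structure of the CPO loss into a genuine \emph{quadratic} lower bound on $D_{\mathrm{TV}}(\pi_h, \pi_n)^2$ under the ideal distribution. The loss difference $\mathcal{L}_{\text{pref}}(\pi_n) - \mathcal{L}_{\text{pref}}(\pi_h)$ is naturally a difference of expected log-likelihood-ratio gaps, which is first-order in the policy discrepancy, whereas the target TV distance enters quadratically through the $\sqrt{\cdot}$. Bridging this requires carefully exploiting the Bradley–Terry/sigmoid structure together with the boundedness assumptions $M$ and the small-$\beta$ regime so that the second-order term dominates and can be lower-bounded; controlling the cross terms from the $\rho_{\text{pref}}^s \to \rho_{\text{ideal}}^s$ substitution simultaneously is where the bookkeeping becomes delicate. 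The two bounded-log-ratio assumptions and $\beta \leq M^{-2}$ are exactly what make this quadratic control tractable, so I would organize the proof so that every appearance of $M$ and $\beta$ is traceable to this expansion.
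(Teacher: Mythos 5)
Your proposal follows essentially the same route as the paper's proof: the performance difference lemma combined with the $U$-bound on $Q^*$, a change of measure from $d_{\pi_n}$ to $d_{\mathrm{pref}}$ costing $2\delta_{\mathrm{dist}}$, the Taylor expansion of $\log\sigma$ at $0$ (yielding the $\log 2$ constant, the $\beta/2$ linear term, and the $\beta M^2/8$ remainder), the $\rho_{\mathrm{pref}}^s \to \rho_{\mathrm{ideal}}^s$ swap costing $4(\beta M + \log 2)\delta_{\mathrm{pref}}$, and finally Pinsker plus Jensen to pull out the averaged total variation. The only refinement worth noting is that the ``obstacle'' you anticipate dissolves immediately: under $\rho_{\mathrm{ideal}}^s = \pi_h \otimes \pi_n$ the linear term is exactly the symmetric KL divergence $\mathrm{KL}\big(\pi_h(\cdot|s)\,\|\,\pi_n(\cdot|s)\big) + \mathrm{KL}\big(\pi_n(\cdot|s)\,\|\,\pi_h(\cdot|s)\big)$ (not a Hellinger or $\chi^2$ quantity), which Pinsker bounds below by $4\,D_{\mathrm{TV}}\big(\pi_h(\cdot|s),\pi_n(\cdot|s)\big)^2$, so no second-order structure of the sigmoid is needed.
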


\begin{proof}
The key is to combine Lem. \ref{lem:ddist}, Lem. \ref{lem:prefpair}, and Lem. \ref{lem:opt} to obtain the bound.

From Lem. \ref{lem:ddist}, we can use the state distribution shift and the total variation of the two policies $\pi_h, \pi_n$ on $d_\mathrm{pref}$ to bound the optimality gap:
\begin{equation}\label{eq:pf1}
    \begin{aligned}
J(\pi_h) - J(\pi_n) &\leq \frac{U}{1 - \gamma} \cdot \Big( \expect\limits_{s \sim d_{\mathrm{pref}}} D_\mathrm{TV}\big(\pi_h(\cdot|s), \pi_n(\cdot|s)\big) + 2 \delta_{\mathrm{dist}} \Big).
    \end{aligned}
    \end{equation}

In addition, for any policy $\pi$, we define the function
\begin{equation}\label{eq:gpref}
 g(\pi) = \expect\limits_{s \sim d_{\mathrm{pref}}} \expect\limits_{a^+ \sim \pi_h(s), a^- \sim \pi_n(s)} \left[- \log \sigma \left( \beta \log \pi (a^+ \mid s) - \beta \log \pi(a^- \mid s) \right) \right],
\end{equation}
which represents the preference loss on ideal preference pairs, where $a^+, a^-$ are sampled directly at each state $s$.

Using \ref{lem:opt}, we can bound the total variation term $\expect\limits_{s \sim d_{\mathrm{pref}}} D_\mathrm{TV}\big(\pi_h(\cdot|s)\big)$ by $g(\pi_n) - g(\pi_h)$:
\begin{equation}\label{eq:pf2}
\expect\limits_{s \sim d_{\mathrm{pref}}} D_\mathrm{TV}\big(\pi_h(\cdot|s), \pi_n(\cdot|s)\big) \leq \sqrt{\frac{g(\pi_n) - g(\pi_h) }{2\beta} + \frac{\beta M^2}{8}}.
\end{equation}
In addition, by Lem. \ref{lem:prefpair}, we can also bound $g(\pi_n) - g(\pi_h)$ by the optimization error $\epsilon$ on $\mathcal{L}_\text{pref}$ and the misalignment of preference levels $\delta_\mathrm{pref}$:
\begin{equation}\label{eq:pf3}
g(\pi_n) - g(\pi_h) \leq \epsilon + 4 (\beta M + \log 2) \cdot \delta_{\mathrm{pref}}.
\end{equation}
Combining Eq. \ref{eq:pf1}, \ref{eq:pf2}, and \ref{eq:pf3} yields Eq. \ref{eq:thm}.

\end{proof}

\begin{lemma}[Performance Optimality Gap on the State Distribution Shift]\label{lem:ddist}
We recall that $d_{\mathrm{pref}}(s) = {|\Dpref|}^{-1} \expect\limits_{(s', a^+ ,a^-) \sim \Dpref} \mathbb{I}[s' = s]$, and we define $U = \max\limits_{s \in \mathcal{S}, a_1, a_2 \in \mathcal{A}} \left| Q^*(s, a_1) - Q^*(s, a_2) \right|$. 

Then, for any two stochastic policies $\pi_h, \pi_n$, we have
\begin{equation}
    \begin{aligned}
J(\pi_h) - J(\pi_n) &\leq \frac{U}{1 - \gamma} \cdot \Big( \expect\limits_{s \sim d_{\mathrm{pref}}} D_\mathrm{TV}\big(\pi_h(\cdot|s), \pi_n(\cdot|s)\big) + 2 \delta_{\mathrm{dist}} \Big).
    \end{aligned}
    \end{equation}
where $\delta_{\mathrm{dist}} = D_{\mathrm{TV}}(d_{\pi_n}, d_{{\mathrm{pref}}})$.
\end{lemma}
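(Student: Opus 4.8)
The plan is to combine the performance difference lemma with a change-of-measure argument that replaces the agent's own occupancy $d_{\pi_n}$ by the preference-dataset distribution $d_{\mathrm{pref}}$. First I would invoke the performance difference lemma (Kakade--Langford) in the form
$$J(\pi_n) - J(\pi_h) = \frac{1}{1-\gamma}\expect_{s\sim d_{\pi_n}}\expect_{a\sim\pi_n(\cdot\mid s)}\big[A^{\pi_h}(s,a)\big],$$
where $A^{\pi_h}(s,a) = Q^*(s,a) - V^{\pi_h}(s)$ is the advantage of the human policy and the $\tfrac{1}{1-\gamma}$ factor reflects the normalization of $d_{\pi_n}$. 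Substituting $V^{\pi_h}(s) = \expect_{a\sim\pi_h}Q^*(s,a)$ and negating rearranges this into
$$J(\pi_h) - J(\pi_n) = \frac{1}{1-\gamma}\expect_{s\sim d_{\pi_n}}\Big[\expect_{a\sim\pi_h(\cdot\mid s)}Q^*(s,a) - \expect_{a\sim\pi_n(\cdot\mid s)}Q^*(s,a)\Big],$$
which is the central identity; note that $Q^*=Q^{\pi_h}$ is exactly the object the hypotheses constrain.

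Next I would bound the inner bracket at each fixed $s$. Because both $\pi_h(\cdot\mid s)$ and $\pi_n(\cdot\mid s)$ integrate to one, subtracting any state-dependent constant $c(s)$ from $Q^*(s,\cdot)$ leaves the difference unchanged. Choosing $c(s)$ as the midpoint of the range of $Q^*(s,\cdot)$ makes $|Q^*(s,a)-c(s)|\le U/2$ for every $a$, by the assumption $|Q^*(s,a)-Q^*(s,a')|\le U$. A H\"older step against the $\ell_1$ gap of the two action distributions then yields
$$\Big|\expect_{a\sim\pi_h}Q^*(s,a) - \expect_{a\sim\pi_n}Q^*(s,a)\Big| \le \frac{U}{2}\,\big\|\pi_h(\cdot\mid s)-\pi_n(\cdot\mid s)\big\|_1 = U\, D_{\mathrm{TV}}\big(\pi_h(\cdot\mid s),\pi_n(\cdot\mid s)\big),$$
so that $J(\pi_h)-J(\pi_n) \le \tfrac{U}{1-\gamma}\,\expect_{s\sim d_{\pi_n}} D_{\mathrm{TV}}(\pi_h(\cdot\mid s),\pi_n(\cdot\mid s))$.

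The last step migrates the outer expectation from $d_{\pi_n}$ to $d_{\mathrm{pref}}$. Setting $f(s)=D_{\mathrm{TV}}(\pi_h(\cdot\mid s),\pi_n(\cdot\mid s))\in[0,1]$, I would write $\expect_{d_{\pi_n}}f - \expect_{d_{\mathrm{pref}}}f \le \|f\|_\infty\,\|d_{\pi_n}-d_{\mathrm{pref}}\|_1 \le 2\delta_{\mathrm{dist}}$, using $\|f\|_\infty\le 1$ together with $\|d_{\pi_n}-d_{\mathrm{pref}}\|_1 = 2D_{\mathrm{TV}}(d_{\pi_n},d_{\mathrm{pref}})=2\delta_{\mathrm{dist}}$. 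Adding this to the previous bound gives exactly the claimed inequality.

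The main difficulty I expect is bookkeeping rather than conceptual: orienting the performance difference lemma so that $Q^{\pi_h}$ (and hence $Q^*$) appears rather than $Q^{\pi_n}$, and tracking the two cancelling factors of $\tfrac12$ --- one from the constant-shift trick that halves the effective range of $Q^*$ to $U/2$, and one from the $D_{\mathrm{TV}}=\tfrac12\|\cdot\|_1$ convention --- so that the policy-TV term ends up with coefficient $U$ while the occupancy-shift term carries $2\delta_{\mathrm{dist}}$. A point worth stating explicitly is that the constant subtraction is valid precisely because both action measures are probability distributions, which is what promotes the governing constant from $\|Q^*\|_\infty$ down to the range bound $U$.
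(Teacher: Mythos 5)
Your proof is correct and follows essentially the same route as the paper's: the performance difference lemma oriented so that $Q^{\pi_h}$ appears under $d_{\pi_n}$, the midpoint-shift/H\"older argument that turns the range bound $U$ into a coefficient on the per-state policy total variation (the paper packages exactly this trick as its technical lemma on expectation differences via total variation), and an $\ell_1$ change of measure from $d_{\pi_n}$ to $d_{\mathrm{pref}}$ costing $2U\delta_{\mathrm{dist}}$. The only, immaterial, difference is the order of the two bounding steps --- the paper swaps the state measure first and then bounds the inner action expectation under $d_{\mathrm{pref}}$, whereas you bound the inner expectation first and then change measure on the $[0,1]$-valued function $s \mapsto D_{\mathrm{TV}}\big(\pi_h(\cdot\mid s),\pi_n(\cdot\mid s)\big)$ --- and both orderings yield identical constants.
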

\begin{proofsketch}
    The key is to use the Performance Difference Lemma (Lem. \ref{lem:pdlemma}) on $J(\pi_h) - J(\pi_n)$, yielding Eq. \ref{sketch1}. Then, we can apply Lem. \ref{lem:tv-bound}, which bounds the expectation on $s \sim d_\mathrm{pref}$ and $s \sim d_{\pi_n}$ by the distribution shift term $\delta_{\mathrm{dist}}$. Finally, applying the assumption $U = \max\limits_{s \in \mathcal{S}, a_1, a_2 \in \mathcal{A}} \left| Q^*(s, a_1) - Q^*(s, a_2) \right|$ bounds the difference of the Q-function by the total variation between $\pi_h$ and $\pi_n$.
\end{proofsketch}
\begin{proof}
    By the Performance Difference Lemma (Lem. \ref{lem:pdlemma}), we have
    \begin{equation}\label{sketch1}
    \begin{aligned}
        J(\pi_h) - J(\pi_n) &= \frac{1}{1 - \gamma} \expect\limits_{s \sim d_{\pi_n}} \expect\limits_{a_h \sim \pi_h(s), a_n \sim \pi_n(s)} \left[ Q^*(s, a_h) - Q^*(s, a_n) \right].
    \end{aligned}
    \end{equation}
    By Lem. \ref{lem:tv-bound}, as $d_{\pi_n}$ and $d_{{\mathrm{pref}}}$ are two distributions on the same state space $\mathcal{S}$, we have
    \begin{equation}
    \begin{aligned}
      & (1- \gamma) ( J(\pi_h) - J(\pi_n)) \\ \leq& \expect\limits_{s \sim d_{{\mathrm{pref}}}} \expect\limits_{a_h \sim \pi_h(s), a_n \sim \pi_n(s)} \left[ Q^*(s, a_h) - Q^*(s, a_n) \right] \\
       &+ 2D_{\mathrm{TV}}(d_{\pi_n}, d_{{\mathrm{pref}}}) \cdot \max \limits_{s \in \mathcal{S}} \left| \expect\limits_{a_h \sim \pi_h(s), a_n \sim \pi_n(s)} \left[ Q^*(s, a_h) - Q^*(s, a_n) \right] \right| \\
       \leq&  \expect\limits_{s \sim d_{{\mathrm{pref}}}} \expect\limits_{a_h \sim \pi_h(s), a_n \sim \pi_n(s)} \left[ Q^*(s, a_h) - Q^*(s, a_n) \right] \\
       & + 2\delta_{\mathrm{dist}} \cdot \max \limits_{s \in \mathcal{S}} \expect\limits_{a_h \sim \pi_h(s), a_n \sim \pi_n(s)} \left| Q^*(s, a_h) - Q^*(s, a_n) \right| \\
       \leq& \expect\limits_{s \sim d_{{\mathrm{pref}}}} \left[ \expect\limits_{a_h \sim \pi_h(s)}  Q^*(s, a_h)  - \expect\limits_{ a_n \sim \pi_n(s)}  Q^*(s, a_n) \right]  + 2 U \cdot \delta_{\mathrm{dist}},
       \end{aligned}\label{eq12}
    \end{equation}
where we use $U = \max\limits_{s \in \mathcal{S}, a_1, a_2 \in \mathcal{A}} \left| Q^*(s, a_1) - Q^*(s, a_2) \right|$ in the last inequality of Eq. \ref{eq12}.

In addition, $\pi_h(s)$ and $\pi_n(s)$ are two probability distributions on the same action space $\mathcal{A}$. By Lem. \ref{lem:tv-bound}, we have for any $s \in \mathcal{S}$,
\begin{equation}
    \expect\limits_{a_h \sim \pi_h(s)}  Q^*(s, a_h)  - \expect\limits_{ a_n \sim \pi_n(s)}  Q^*(s, a_n) \leq U \cdot D_\mathrm{TV}\big(\pi_h(\cdot|s), \pi_n(\cdot|s)\big).
\end{equation}

This proves that
\begin{equation}
    \begin{aligned}
J(\pi_h) - J(\pi_n) &\leq \frac{U}{1 - \gamma} \cdot \Big( \expect\limits_{s \sim d_{\mathrm{pref}}} D_\mathrm{TV}\big(\pi_h(\cdot|s), \pi_n(\cdot|s)\big) + 2 \delta_{\mathrm{dist}} \Big).
    \end{aligned}
    \end{equation}
\end{proof}

\begin{lemma}[Misalignment of Preference Pairs]\label{lem:prefpair}
We recall that the loss function of the policy $\pi$ is $\mathcal{L}_{\text{pref}}(\pi) = -\expect\limits_{(s, a^+, a^-) \sim \Dpref} \left[ \log \sigma \left( \beta \log \pi (a^+ \mid s) - \beta \log \pi(a^- \mid s) \right) \right]$.
And the optimization loss is defined as $\epsilon =\mathcal{L}_{\mathrm{pref}}(\pi_n) - \mathcal{L}_{\mathrm{pref}}(\pi_h) $.

In addition, following Eq. \ref{eq:gpref}, for any policy $\pi$, we define
\begin{equation}\label{eq:lem5condition}
 g(\pi) = \expect\limits_{s \sim d_{\mathrm{pref}}} \expect\limits_{a^+ \sim \pi_h(s), a^- \sim \pi_n(s)} \left[- \log \sigma \left( \beta \log \pi (a^+ \mid s) - \beta \log \pi(a^- \mid s) \right) \right].
\end{equation}

Under the assumption that  for any $(s, a, a')$, $ |\log \pi_h (a | s) - \log \pi_h(a' | s) | \leq M$, and $|\log \pi_n (a | s) - \log \pi_n(a'|s)  | \leq M$, we have

we can bound
\begin{equation}
g(\pi_n) - g(\pi_h) \leq \epsilon + 4 (\beta M + \log 2) \cdot \delta_{\mathrm{pref}},
\end{equation}
where  $\delta_{\mathrm{pref}} = \expect\limits_{s \sim d_{{\mathrm{pref}}}} D_{\mathrm{TV}}(\rho_\mathrm{ideal}^s, \rho_{\mathrm{pref}}^s)$.
\end{lemma}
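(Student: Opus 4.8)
The plan is to exploit the fact that $g(\pi)$ and $\mathcal{L}_{\mathrm{pref}}(\pi)$ are expectations of the \emph{same} pointwise logistic loss $\ell_\pi(s,a^+,a^-) := -\log\sigma\big(\beta\log\pi(a^+\mid s) - \beta\log\pi(a^-\mid s)\big)$, differing only in the distribution from which the action pair $(a^+,a^-)$ is drawn at each state: $g(\pi)$ averages $\ell_\pi$ over $(a^+,a^-)\sim\rho_{\mathrm{ideal}}^s$, whereas $\mathcal{L}_{\mathrm{pref}}(\pi)$ averages it over $(a^+,a^-)\sim\rho_{\mathrm{pref}}^s$, both under $s\sim d_{\mathrm{pref}}$. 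Consequently the difference $g(\pi)-\mathcal{L}_{\mathrm{pref}}(\pi)$ is a pure change-of-measure gap, and the natural route is to (i) bound $\ell_\pi$ uniformly, (ii) convert the change of measure into a total-variation penalty, and (iii) telescope through the optimization error $\epsilon$.

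First I would establish a uniform bound on the integrand. Writing $x = \beta\big(\log\pi(a^+\mid s)-\log\pi(a^-\mid s)\big)$, the hypothesis $|\log\pi(a\mid s)-\log\pi(a'\mid s)|\le M$ (for $\pi\in\{\pi_h,\pi_n\}$) gives $|x|\le\beta M$. Since $\ell_\pi = \log(1+e^{-x})\ge 0$ and is decreasing in $x$, its maximum over $|x|\le\beta M$ is $\log(1+e^{\beta M})\le \log(2e^{\beta M}) = \beta M + \log 2$. Hence $0\le \ell_\pi \le \beta M + \log 2$ for both policies. This is the one place where the bounded-log-ratio assumption is essential: without it the logistic loss is unbounded and no total-variation control is possible.

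Next I would perform the change of measure state by state. For fixed $s$, $\ell_\pi(s,\cdot,\cdot)$ is a function on $\mathcal{A}\times\mathcal{A}$ with sup-norm at most $\beta M+\log 2$, so the standard inequality $|\expect_P f - \expect_Q f|\le 2\|f\|_\infty D_{\mathrm{TV}}(P,Q)$ yields
\begin{equation}
\Big| \expect_{(a^+,a^-)\sim\rho_{\mathrm{ideal}}^s}[\ell_\pi] - \expect_{(a^+,a^-)\sim\rho_{\mathrm{pref}}^s}[\ell_\pi] \Big| \le 2(\beta M + \log 2)\, D_{\mathrm{TV}}(\rho_{\mathrm{ideal}}^s, \rho_{\mathrm{pref}}^s).
\end{equation}
Taking the expectation over $s\sim d_{\mathrm{pref}}$ and recalling $\delta_{\mathrm{pref}} = \expect_{s\sim d_{\mathrm{pref}}} D_{\mathrm{TV}}(\rho_{\mathrm{ideal}}^s, \rho_{\mathrm{pref}}^s)$ gives the clean bound $|g(\pi)-\mathcal{L}_{\mathrm{pref}}(\pi)|\le 2(\beta M+\log 2)\,\delta_{\mathrm{pref}}$, which I would apply once with $\pi=\pi_n$ and once with $\pi=\pi_h$.

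Finally I would telescope through the optimization error. Decomposing $g(\pi_n)-g(\pi_h) = \big[g(\pi_n)-\mathcal{L}_{\mathrm{pref}}(\pi_n)\big] + \big[\mathcal{L}_{\mathrm{pref}}(\pi_n)-\mathcal{L}_{\mathrm{pref}}(\pi_h)\big] + \big[\mathcal{L}_{\mathrm{pref}}(\pi_h)-g(\pi_h)\big]$, the middle bracket is exactly $\epsilon$ by definition, and the two outer brackets are each bounded by $2(\beta M+\log 2)\delta_{\mathrm{pref}}$ from the previous step. Summing yields $g(\pi_n)-g(\pi_h)\le \epsilon + 4(\beta M+\log 2)\delta_{\mathrm{pref}}$, the claimed inequality. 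I do not expect a serious obstacle here: the argument is essentially a bounded change-of-measure estimate, and the only genuine subtlety is getting the uniform loss bound right so the constant comes out as $\beta M+\log 2$ rather than something looser. The telescoping decomposition is what lets the optimization error $\epsilon$ on the empirical pairs carry the bulk of the gap, while $\delta_{\mathrm{pref}}$ pays only for the distributional discrepancy between the ideal and recorded action pairs.
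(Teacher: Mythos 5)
Your proposal is correct and follows essentially the same route as the paper's proof: both establish the uniform bound $0 \le \ell_\pi \le \beta M + \log 2$ on the logistic loss, apply the change-of-measure inequality $\lvert \expect_P f - \expect_Q f\rvert \le 2\lVert f\rVert_\infty D_{\mathrm{TV}}(P,Q)$ state by state between $\rho_{\mathrm{ideal}}^s$ and $\rho_{\mathrm{pref}}^s$, and then telescope through $\epsilon = \mathcal{L}_{\mathrm{pref}}(\pi_n) - \mathcal{L}_{\mathrm{pref}}(\pi_h)$ to collect the $4(\beta M + \log 2)\,\delta_{\mathrm{pref}}$ penalty. The paper phrases the telescoping as a two-sided sandwich $\mathcal{L}_{\mathrm{pref}}(\pi) - 2 l^\pi_{\max}\delta_{\mathrm{pref}} \le g(\pi) \le \mathcal{L}_{\mathrm{pref}}(\pi) + 2 l^\pi_{\max}\delta_{\mathrm{pref}}$ applied to $\pi_n$ and $\pi_h$ separately, but this is the same decomposition as yours.
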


\begin{proofsketch}
    The key is to apply Lem. \ref{lem:tv-bound} on the two distributions $\rho_\mathrm{ideal}^s$ and $ \rho_{\mathrm{pref}}^s$, so that we can bound the difference of $\mathcal{L}_{\text{pref}}(\pi)$ and $g(\pi)$ for any policy $\pi$ by $O(\delta_{\mathrm{pref}})$.
\end{proofsketch}

\begin{proof}

For any $s \in \mathcal{S}$, we denote $\rho_{\mathrm{ideal}}^s(a_h, a_n) = \pi_h(a_h \mid s) \pi_n(a_n \mid s)$, a probability distribution on $\mathcal{A} \times \mathcal{A}$. We also denote $\rho_{\mathrm{pref}}^s(a_h, a_n) = \rho_{\mathrm{pref}} (s, a_h, a_n) / d_{\mathrm{pref}}(s)$ for any $s$ such that $d_{\mathrm{pref}}(s) > 0$, where we recall that $\rho_{\mathrm{pref}} (s, a_h, a_n) = {|\Dpref|}^{-1} \expect\limits_{(s', a^+ ,a^-) \sim \Dpref} \mathbb{I}[s' = s, a^+ = a_h, a^- = a_n]$.

The key is that $\rho_\mathrm{ideal}^s$ and $\rho^s_{\mathrm{pref}}$ are two distributions on the same space $\mathcal{A} \times \mathcal{A}$, and we can apply Lem. \ref{lem:tv-bound} on Eq. \ref{eq:lem5condition} to obtain the proof.

We denote $l^\pi(s, a^+, a^-) = - \log \sigma \left( \beta \log \pi (a^+ \mid s) - \beta \log \pi(a^- \mid s) \right)$. 

We also denote $l^\pi_\text{max} = \max\limits_{s, a^+, a^-} |l^\pi(s, a^+, a^-)|$, and $l_\text{max} = \max(l^{\pi_h}_\text{max}, l^{\pi_n}_\text{max})$. Then, for any policy $\pi$,
\begin{equation}
\begin{aligned}
    g(\pi) &= \expect\limits_{s \sim d_{\mathrm{pref}}} \expect\limits_{a^+ \sim \pi_h(s), a^- \sim \pi_n(s)} l^\pi(s, a^+, a^-) \\
    & = \expect\limits_{s \sim d_{\mathrm{pref}}} \expect\limits_{(a^+, a^-) \sim \rho_\mathrm{ideal}^s} l^\pi(s, a^+, a^-) \\
    & \leq \expect\limits_{s \sim d_{\mathrm{pref}}} \Big [ 2 l^\pi_\text{max} \cdot D_\text{TV} (\rho_\mathrm{ideal}^s, \rho_{\mathrm{pref}}^s) +  \expect\limits_{(a^+, a^-) \sim \rho_\mathrm{pref}^s} l^\pi(s, a^+, a^-) \Big ] \\
    &= 2 l^\pi_\text{max} \delta_{\mathrm{pref}} + \expect\limits_{s \sim d_{\mathrm{pref}}} \expect\limits_{(a^+, a^-) \sim \rho_\mathrm{pref}^s} l^\pi(s, a^+, a^-)  \\
    &=  2 l^\pi_\text{max} \delta_{\mathrm{pref}} + |\Dpref|^{-1} \sum\limits_{(s, a^+, a^-) \in \Dpref} l^\pi(s, a^+, a^-)  \\ 
    &=2 l^\pi_\text{max} \delta_{\mathrm{pref}} +  \mathcal{L}_{\text{pref}}(\pi).
\end{aligned}
\end{equation}

Similarly, we can obtain that $g(\pi) \geq - 2 l^\pi_\text{max} \delta_{\mathrm{pref}} +  \mathcal{L}_{\text{pref}}(\pi)$ for any policy $\pi$. Thus we have
\begin{equation}
    g(\pi_n) - g(\pi_h) \leq 4 l_\text{max}  \delta_{\mathrm{pref}} + (\mathcal{L}_{\text{pref}}(\pi_n) - \mathcal{L}_{\text{pref}}(\pi_h)) =  4 l_\text{max}  \delta_{\mathrm{pref}} + \epsilon.
\end{equation}
Finally, under the condition that $ |\log \pi (a | s) - \log \pi(a' | s) | \leq M$ for any $(s, a, a')$, we have $|l^\pi(s, a, a')| \leq - \log \sigma (-\beta M) = \log (1 + \exp(\beta M)) \leq \beta M + \log 2$.

This implies that $l_\text{max} \leq \beta M + \log 2$ and completes the proof.
\end{proof}

\begin{lemma}[Optimization Error Bounds the Total Variation]\label{lem:opt}

We assume that for any $(s, a, a')$, $ |\log \pi_h (a | s) - \log \pi_h(a' | s) | \leq M$, and $|\log \pi_n (a | s) - \log \pi_n(a'|s)  | \leq M$.

We recall that $g(\pi) = \expect\limits_{s \sim d_{\mathrm{pref}}} \expect\limits_{a^+ \sim \pi_h(s), a^- \sim \pi_n(s)} \left[- \log \sigma \left( \beta \log \pi (a^+ \mid s) - \beta \log \pi(a^- \mid s) \right) \right]$, which is defined in Eq. \ref{eq:gpref}. 
Then we have
\begin{equation}
\expect\limits_{s \sim d_{\mathrm{pref}}} D_\mathrm{TV}\big(\pi_h(\cdot|s), \pi_n(\cdot|s)\big) \leq \sqrt{\frac{g(\pi_n) - g(\pi_h) }{2\beta} + \frac{\beta M^2}{8}}.
\end{equation}
\end{lemma}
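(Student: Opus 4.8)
The plan is to prove the equivalent inequality obtained by squaring and rearranging: it suffices to show
\begin{equation}
g(\pi_n) - g(\pi_h) \;\ge\; 2\beta\,\Big(\expect\limits_{s \sim d_{\mathrm{pref}}} D_{\mathrm{TV}}\big(\pi_h(\cdot|s),\pi_n(\cdot|s)\big)\Big)^{2} - \frac{\beta^{2} M^{2}}{4},
\end{equation}
since dividing by $2\beta$ and taking the square root then recovers the stated bound, and the radicand is nonnegative because it bounds a square from above. So the whole argument reduces to establishing a quadratic lower bound on the loss gap $g(\pi_n) - g(\pi_h)$, where throughout all expectations are taken over $s \sim d_{\mathrm{pref}}$, $a^{+}\sim\pi_h(s)$, and $a^{-}\sim\pi_n(s)$.

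First I would abbreviate $f(x) = -\log\sigma(x) = \log(1+e^{-x})$ and $x_\pi = \beta(\log\pi(a^{+}|s) - \log\pi(a^{-}|s))$, so that $g(\pi) = \expect[f(x_\pi)]$. The key analytic input is that $f$ is convex with $f(0)=\log 2$, $f'(0)=-\tfrac12$, and $0<f''\le\tfrac14$, which yields the two-sided estimate
\begin{equation}
\log 2 - \tfrac{x}{2} \;\le\; f(x) \;\le\; \log 2 - \tfrac{x}{2} + \tfrac{x^{2}}{8},
\end{equation}
the left inequality coming from the tangent line at $0$ and the right from a second-order Taylor expansion with $f''\le\tfrac14$. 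Applying the lower bound to $g(\pi_n)$ and the upper bound to $g(\pi_h)$, the $\log 2$ terms cancel and I obtain
\begin{equation}
g(\pi_n) - g(\pi_h) \;\ge\; \tfrac12\,\expect\big[x_{\pi_h} - x_{\pi_n}\big] - \tfrac18\,\expect\big[x_{\pi_h}^{2}\big].
\end{equation}

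The crux, which I expect to be the main obstacle, is identifying the two surviving terms. Writing $x_{\pi_h}-x_{\pi_n} = \beta\big(\log\tfrac{\pi_h(a^{+}|s)}{\pi_n(a^{+}|s)} - \log\tfrac{\pi_h(a^{-}|s)}{\pi_n(a^{-}|s)}\big)$ and taking the expectation with $a^{+}\sim\pi_h$, $a^{-}\sim\pi_n$ converts the first term into $\tfrac{\beta}{2}$ times the symmetrized (Jeffreys) divergence $\mathrm{KL}(\pi_h\|\pi_n)+\mathrm{KL}(\pi_n\|\pi_h)$ at each state $s$; this recognition is what turns the loss gap into a genuine divergence between the policies. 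For the second term, the assumption $|\log\pi_h(a|s)-\log\pi_h(a'|s)|\le M$ gives $|x_{\pi_h}|\le\beta M$, hence $\expect[x_{\pi_h}^{2}]\le\beta^{2}M^{2}$ and $-\tfrac18\expect[x_{\pi_h}^{2}]\ge-\tfrac{\beta^{2}M^{2}}{8}$.

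Finally I would apply Pinsker's inequality to each relative-entropy term, so the Jeffreys divergence at $s$ is at least $4\,D_{\mathrm{TV}}(\pi_h(\cdot|s),\pi_n(\cdot|s))^{2}$, and then Jensen's inequality $\expect\limits_{s}[D_{\mathrm{TV}}^{2}]\ge(\expect\limits_{s} D_{\mathrm{TV}})^{2}$ to pull the expectation over $s\sim d_{\mathrm{pref}}$ outside the square. Combining these steps gives
\begin{equation}
g(\pi_n) - g(\pi_h) \;\ge\; 2\beta\,\Big(\expect\limits_{s \sim d_{\mathrm{pref}}} D_{\mathrm{TV}}\big(\pi_h(\cdot|s),\pi_n(\cdot|s)\big)\Big)^{2} - \frac{\beta^{2}M^{2}}{8},
\end{equation}
which is in fact sharper than the target (constant $\tfrac18$ rather than $\tfrac14$), so rearranging and taking the square root yields the lemma a fortiori. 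Beyond the Jeffreys-divergence identification, the remaining steps are standard inequalities; I note that only the $\pi_h$ log-ratio bound is actually consumed by the error term, with the $\pi_n$ bound left as slack.
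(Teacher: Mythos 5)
Your proof is correct and follows essentially the same route as the paper's: a second-order estimate of $-\log\sigma$ at $0$, identification of the symmetrized KL (Jeffreys) divergence after taking the expectation over $a^+\sim\pi_h$, $a^-\sim\pi_n$, then Pinsker's inequality and Jensen's inequality. The only deviation is that you use the tangent-line (convexity) lower bound on $g(\pi_n)$ while reserving the quadratic Taylor correction for $g(\pi_h)$ alone, whereas the paper applies the two-sided bound $\left|\log\sigma(x)+\log 2-\tfrac{x}{2}\right|\le\tfrac{x^2}{8}$ to both terms; this yields an error of $\tfrac{\beta^2M^2}{8}$ instead of $\tfrac{\beta^2M^2}{4}$, a slightly sharper constant that implies the stated lemma a fortiori.
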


\begin{proofsketch}
    First, we define 
    \begin{equation}\label{eq:funch}
    f(\pi) =  - \frac{\beta}{2} \expect\limits_{s \sim d_{\mathrm{pref}}} \expect\limits_{a^+ \sim \pi_h(s), a^- \sim \pi_n(s)} \Big[\log \pi(a^+|s) - \log \pi(a^-|s) \Big] + \log 2.
    \end{equation}

    Using the Taylor's expansion on the function $\log \sigma(x)$ at $x = 0$, when the policy $\pi$ satisfies $ |\log \pi (a | s) - \log \pi(a' | s) | \leq M$ for any $(s, a, a')$, we can obtain that $|g(\pi) - f(\pi)| \leq \frac{\beta^2M^2}{8} $. 

    In addition, $f(\pi_n) - f(\pi_h)$ bounds the KL divergence of the two policies $\pi_h$ and $\pi_n$ over $s \sim d_{\mathrm{pref}}$. So we can use Pinsker's inequality to obtain the bound on $ D_\mathrm{TV}\big(\pi_h(\cdot|s), \pi_n(\cdot|s)\big)$.
\end{proofsketch}

\begin{proof}

For any $(s, a^+, a^-)$, we denote $u_n(s, a^+, a^-) = \log \pi_n(a^+|s) - \log \pi_n(a^-|s)$, and $u_h(s, a^+, a^-) = \log \pi_h(a^+|s) - \log \pi_h(a^-|s)$. From the assumptions, we can obtain that $|u_n(s, a^+, a^-)| \leq M$ and $|u_h(s, a^+, a^-)| \leq M$.

By definition of the function $g(\cdot)$, we have
$g(\pi_n) - g(\pi_h) = \expect\limits_{s \sim d_{\mathrm{pref}}} \expect\limits_{a^+ \sim \pi_h(s), a^- \sim \pi_n(s)}  \left[ \log \sigma \left( \beta \cdot u_h(s, a^+, a^-)  \right) - \log \sigma \left( \beta \cdot u_n(s, a^+, a^-)  \right)  \right]$.

The Taylor's expansion of $\log \sigma(x)$ at $x = 0$ ensures that for any $ x \in \mathbb{R}$, we have
\begin{equation}
    \left|\log \sigma(x) + \log 2 - \frac{1}{2} x \right| \leq \frac{1}{8}x^2.
\end{equation}

This ensures that 
\begin{equation}
\begin{aligned}
    g(\pi_n) &= \expect\limits_{s \sim d_{\mathrm{pref}}} \expect\limits_{a^+ \sim \pi_h(s), a^- \sim \pi_n(s)}  \left[- \log \sigma \left( \beta \cdot u_n(s, a^+, a^-)  \right) \right] \\
   &\geq \log 2 - \frac{\beta}{2} \expect\limits_{s \sim d_{\mathrm{pref}}} \expect\limits_{a^+ \sim \pi_h(s), a^- \sim \pi_n(s)}  u_n(s, a^+, a^-) - \frac{\beta^2M^2}{8} ,
    \end{aligned}
\end{equation}
and similarly,
\begin{equation}
\begin{aligned}
    g(\pi_h) &= \expect\limits_{s \sim d_{\mathrm{pref}}} \expect\limits_{a^+ \sim \pi_h(s), a^- \sim \pi_n(s)}  \left[- \log \sigma \left( \beta \cdot u_h(s, a^+, a^-)  \right) \right] \\
   &\leq \log 2 - \frac{\beta}{2} \expect\limits_{s \sim d_{\mathrm{pref}}} \expect\limits_{a^+ \sim \pi_h(s), a^- \sim \pi_n(s)}  u_h(s, a^+, a^-) +\frac{\beta^2M^2}{8} ,
    \end{aligned}
\end{equation}

Hence, we have
\begin{equation}
\begin{aligned}
    g(\pi_n) - g(\pi_h) & \geq \frac{\beta}{2} \expect\limits_{s \sim d_{\mathrm{pref}}} \expect\limits_{a^+ \sim \pi_h(s), a^- \sim \pi_n(s)}  \left[ u_h(s, a^+, a^-) - u_n(s, a^+, a^-) \right] - \frac{\beta^2M^2}{4} \\
    & = \frac{\beta}{2} \expect\limits_{s \sim d_{\mathrm{pref}}} \expect\limits_{a^+ \sim \pi_h(s), a^- \sim \pi_n(s)}  \left[\log \frac{\pi_h(a^+|s)}{\pi_h(a^-|s)} -\log \frac{\pi_n(a^+|s)}{\pi_n(a^-|s)} \right] - \frac{\beta^2M^2}{4} \\
    &= \frac{\beta}{2} \expect\limits_{s \sim d_{\mathrm{pref}}}  \left [ \expect\limits_{a^+ \sim \pi_h(s)} \log \frac{\pi_h(a^+|s)}{\pi_n(a^+|s)} + \expect\limits_{a^- \sim \pi_n(s)} \log \frac{\pi_n(a^-|s)}{\pi_h(a^-|s)}\right] - \frac{\beta^2M^2}{4}.
\end{aligned}
\end{equation}

By the definition of KL divergence, we have
\begin{equation}
\begin{aligned}
g(\pi_n) - g(\pi_h) &= \frac{\beta}{2}  \expect\limits_{s \sim d_{\mathrm{pref}}}  \Big [ \mathrm{KL}\Big(\pi_h(\cdot|s) \Big\| \pi_n(\cdot|s)\Big) + \mathrm{KL}\Big(\pi_n(\cdot|s) \Big\| \pi_h(\cdot|s)\Big) \Big] - \frac{\beta^2M^2}{4} \\
& \geq 2 \beta \expect\limits_{s \sim d_{\mathrm{pref}}} \Big [ D_\mathrm{TV}\big(\pi_h(\cdot|s), \pi_n(\cdot|s)\big) \Big] ^2 - \frac{\beta^2M^2}{4},
\end{aligned}
\end{equation}
where we use Pinsker's inequality to obtain the bound on $ D_\mathrm{TV}\big(\pi_h(\cdot|s), \pi_n(\cdot|s)\big)$ from the KL divergence.

Finally, we apply the inequality $\mathbb{E}[X^2] \geq (\mathbb{E}[X])^2$ on $X = D_\mathrm{TV}\big(\pi_h(\cdot|s), \pi_n(\cdot|s)\big)$, so that we have
\begin{equation}
\begin{aligned}
g(\pi_n) - g(\pi_h) & \geq 2 \beta \Big [ \expect\limits_{s \sim d_{\mathrm{pref}}}  D_\mathrm{TV}\big(\pi_h(\cdot|s), \pi_n(\cdot|s)\big) \Big] ^2 - \frac{\beta^2M^2}{4}.
\end{aligned}
\end{equation}
This proves that
\begin{equation}
\begin{aligned}
\expect\limits_{s \sim d_{\mathrm{pref}}}  D_\mathrm{TV}\big(\pi_h(\cdot|s), \pi_n(\cdot|s)\big) \leq \sqrt{\frac{g(\pi_n) - g(\pi_h) }{2\beta} + \frac{\beta M^2}{8}}.
\end{aligned}
\end{equation}

\end{proof}

\section{Technical Lemmas}

\begin{lemma}[Expectation Difference via Total Variation]\label{lem:tv-bound}
Let \(P\) and \(Q\) be two probability distributions on a measurable space \(\mathcal{X}\), and let 
$f \colon \mathcal{X} \to \mathbb{R}$
be any measurable function satisfying the uniform bound $|f(x) | \leq M$ for any $x \in \mathcal{X}$. 
Then
\begin{equation}
    \left|\expect\limits_{x \sim P(\cdot)} f(x)  - \expect\limits_{x \sim Q(\cdot)} f(x) \right| \leq 2M \cdot D_{\mathrm{TV}}(P, Q),
\end{equation}
where
\(
D_{\mathrm{TV}}(P,Q)=\tfrac12\|P-Q\|_1
\)
is the total variation distance.

In addition, when the measurable function $g$ satisfies the bound
$
|g(x_1) - g(x_2)| \leq M'$  for any $x_1, x_2\in\mathcal{X}$, we have
\begin{equation}
    \left|\expect\limits_{x \sim P(\cdot)} g(x)  - \expect\limits_{x \sim Q(\cdot)} g(x) \right| \leq M' \cdot D_{\mathrm{TV}}(P, Q).
\end{equation}
\end{lemma}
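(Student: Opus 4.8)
The plan is to prove both inequalities directly from the definition $D_{\mathrm{TV}}(P,Q) = \tfrac{1}{2}\|P - Q\|_1$ by writing each expectation difference as an integral against the signed measure $P - Q$. For the first inequality, I would start from
\begin{equation}
\expect_{x \sim P} f(x) - \expect_{x \sim Q} f(x) = \int_{\mathcal{X}} f(x)\, \big(P(dx) - Q(dx)\big),
\end{equation}
and bound the right-hand side by $\int_{\mathcal{X}} |f(x)|\, |P - Q|(dx)$. Invoking the uniform bound $|f(x)| \leq M$ together with $\int_{\mathcal{X}} |P - Q|(dx) = \|P-Q\|_1 = 2\, D_{\mathrm{TV}}(P,Q)$ then yields the claimed factor $2M \cdot D_{\mathrm{TV}}(P,Q)$ in one line.

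For the second inequality, the key observation is that the difference of expectations is invariant under adding a constant to the integrand, since $\expect_P[c] = \expect_Q[c] = c$. I would therefore exploit the oscillation bound on $g$: the hypothesis $|g(x_1) - g(x_2)| \leq M'$ says that the range of $g$ has diameter at most $M'$, so the centered function $\tilde g = g - c$ with $c = \tfrac{1}{2}\big(\sup_x g(x) + \inf_x g(x)\big)$ satisfies $|\tilde g(x)| \leq M'/2$ for all $x$. Because $\expect_P g - \expect_Q g = \expect_P \tilde g - \expect_Q \tilde g$, applying the first inequality to $\tilde g$ (whose uniform bound is $M'/2$) produces the sharper constant $2 \cdot (M'/2) \cdot D_{\mathrm{TV}}(P,Q) = M' \cdot D_{\mathrm{TV}}(P,Q)$.

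Both parts are elementary, so I do not anticipate a genuine technical obstacle; the only point requiring care is the centering step in the second part. The reason the constant improves from $2M'$ to $M'$ is precisely that only the oscillation of $g$ enters the difference of expectations, not its absolute magnitude, so one should center $g$ at the midpoint of its range rather than crudely bound it by its sup-norm. A minor caveat is that $\sup_x g$ and $\inf_x g$ should be read as essential sup/inf (or one simply notes that bounded oscillation plus integrability against $P$ and $Q$ suffices); alternatively, one can bypass the explicit constant entirely by invoking the dual representation $D_{\mathrm{TV}}(P,Q) = \sup_{\|h\|_\infty \le 1} \tfrac{1}{2}\big(\expect_P h - \expect_Q h\big)$ applied to the rescaled centered function $\tilde g / (M'/2)$.
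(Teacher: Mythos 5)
Your proposal is correct and takes essentially the same route as the paper: the first bound follows from the triangle inequality against the signed measure $P-Q$ together with $\|P-Q\|_1 = 2D_{\mathrm{TV}}(P,Q)$, and the second from centering $g$ at the midpoint of its range so the centered function has sup-norm at most $M'/2$, exactly as the paper does. The differences (integral versus sum notation, your remark on essential sup/inf, and the optional dual-representation detour) are cosmetic.
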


\begin{proof}
When $|f(x) | \leq M$ for any $x$, we have
\begin{equation}
\begin{aligned}
 \left|\expect\limits_{x \sim P(\cdot)} f(x)  - \expect\limits_{x \sim Q(\cdot)} f(x) \right| &=  \left| \sum \limits_{x} f(x) \cdot (P(x) - Q(x)) \right| \\
 & \leq  \sum \limits_{x} \left| f(x) \right| \cdot \left|P(x) - Q(x) \right| \\
 & \leq M \cdot \sum \limits_{x} \left|P(x) - Q(x) \right| \\
 & = 2M \cdot D_{\mathrm{TV}}(P,Q).
\end{aligned}
\end{equation}

When $|g(x_1) - g(x_2)| \leq M'$  for any $x_1, x_2\in\mathcal{X}$, we set $f(x) = g(x) - \frac{1}{2}c$, where $c = \sup\limits_{x \in \mathcal{X}} f(x) + \inf\limits_{x \in \mathcal{X}} f(x)$. As we have $|f(x) | \leq M'/2$ for any $x \in \mathcal{X}$, we have
\begin{equation}
\left|\expect\limits_{x \sim P(\cdot)} g(x)  - \expect\limits_{x \sim Q(\cdot)} g(x) \right| = \left|\expect\limits_{x \sim P(\cdot)} f(x)  - \expect\limits_{x \sim Q(\cdot)} f(x) \right| \leq M' \cdot D_{\mathrm{TV}}(P, Q).
\end{equation}

\end{proof}

\begin{lemma}[Performance Gap Between Human Policy and Novice Policy]\label{lem:pdlemma}

We denote the Q-function of human policy $\pi_h$ as $Q^*(s, a) = \expect\limits_{\tau\sim P_{\pi_h}}[
\sum\limits_{t=0}^{\infty} \gamma^{t} r(s_t, a_t) \big | s_0 = s, a_0 = a].$

For the human policy $\pi_h$ and the novice policy $\pi_n$ whose value functions are $J(\pi_h), J(\pi_n)$, respectively, we have
\begin{equation}
\begin{aligned}
J(\pi_h) - J(\pi_n) &=  \frac{1}{1 - \gamma} \expect\limits_{s \sim d_{\pi_n}} \left[ \expect\limits_{a_h \sim \pi_h(s)}  Q^*(s, a_h)  - \expect\limits_{ a_n \sim \pi_n(s)}  Q^*(s, a_n) \right]. \\
\end{aligned}
\end{equation}
\end{lemma}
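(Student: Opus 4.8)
The plan is to recognize this identity as the classical Performance Difference Lemma, specialized to the pair $(\pi_h, \pi_n)$ with $Q^*=Q^{\pi_h}$, and to prove it via a telescoping argument on the human value function $V^{\pi_h}(s) := \expect\limits_{a \sim \pi_h(\cdot \mid s)} Q^*(s, a)$. First I would write $J(\pi_h) = \expect\limits_{s_0 \sim d_0}[V^{\pi_h}(s_0)]$ and re-express this starting value along trajectories $\tau = (s_0, a_0, s_1, \dots)$ sampled under the \emph{novice} policy $\pi_n$, using the telescoping identity
\[
V^{\pi_h}(s_0) = \expect\limits_{\tau \sim P_{\pi_n}}\Big[\sum_{t=0}^\infty \gamma^t \big(V^{\pi_h}(s_t) - \gamma V^{\pi_h}(s_{t+1})\big)\Big].
\]
The right-hand side collapses to $V^{\pi_h}(s_0) - \lim_{T \to \infty}\gamma^{T+1}V^{\pi_h}(s_{T+1})$, and the boundary term vanishes because rewards lie in $[R_{\min}, R_{\max}]$ and $\gamma < 1$ keep $V^{\pi_h}$ uniformly bounded.

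Next I would subtract this from $J(\pi_n) = \expect\limits_{\tau \sim P_{\pi_n}}[\sum_t \gamma^t r(s_t, a_t)]$ to obtain
\[
J(\pi_n) - J(\pi_h) = \expect\limits_{\tau \sim P_{\pi_n}}\Big[\sum_{t=0}^\infty \gamma^t\big(r(s_t, a_t) + \gamma V^{\pi_h}(s_{t+1}) - V^{\pi_h}(s_t)\big)\Big].
\]
Taking the conditional expectation over $s_{t+1} \sim \mathcal{P}(\cdot \mid s_t, a_t)$ and invoking the Bellman relation $Q^*(s, a) = r(s, a) + \gamma \expect\limits_{s' \sim \mathcal{P}(\cdot \mid s, a)} V^{\pi_h}(s')$, each summand reduces to $\expect\limits_{a_t \sim \pi_n(\cdot \mid s_t)}\big[Q^*(s_t, a_t) - V^{\pi_h}(s_t)\big]$, i.e.\ the expected advantage of $\pi_h$ evaluated at actions drawn from $\pi_n$.

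Finally I would convert the discounted time-sum into the discounted state distribution. Using $d_{\pi_n}(s) = (1-\gamma)\expect\limits_{\tau \sim P_{\pi_n}}[\sum_t \gamma^t \mathbb{I}[s_t = s]]$, we have $\sum_{t=0}^\infty \gamma^t \expect\limits_{s_t}[\cdot] = (1-\gamma)^{-1}\expect\limits_{s \sim d_{\pi_n}}[\cdot]$, which gives
\[
J(\pi_n) - J(\pi_h) = \frac{1}{1-\gamma}\expect\limits_{s \sim d_{\pi_n}}\Big[\expect\limits_{a_n \sim \pi_n(\cdot \mid s)} Q^*(s, a_n) - V^{\pi_h}(s)\Big].
\]
Substituting $V^{\pi_h}(s) = \expect\limits_{a_h \sim \pi_h(\cdot \mid s)} Q^*(s, a_h)$ and negating both sides yields exactly the claimed formula for $J(\pi_h) - J(\pi_n)$.

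The main obstacle is a matter of rigor rather than ideas: justifying the interchange of the infinite sum with the expectation in the telescoping step and confirming that $\gamma^{T+1}V^{\pi_h}(s_{T+1}) \to 0$. Both follow from the uniform bound $|V^{\pi_h}(s)| \le \max(|R_{\min}|,|R_{\max}|)/(1-\gamma)$ together with dominated convergence, so no assumptions beyond those in the MDP definition are needed. The one subtlety to track carefully is the asymmetry that \emph{all} trajectory expectations are taken under $\pi_n$ while the value and Q-functions are those of $\pi_h$; keeping this straight is precisely what makes the advantage of $\pi_h$ appear and fixes the sign so that $\expect_{a_h \sim \pi_h} Q^*$ sits on the positive side of the final expression.
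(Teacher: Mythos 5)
Your proof is correct, and it rests on the same identity the paper uses: the performance difference lemma applied to $(\pi_h,\pi_n)$, giving $J(\pi_n)-J(\pi_h)=\frac{1}{1-\gamma}\expect_{s\sim d_{\pi_n}}\expect_{a\sim\pi_n(\cdot\mid s)}\left[Q^*(s,a)-V^{\pi_h}(s)\right]$, followed by expanding $V^{\pi_h}(s)=\expect_{a_h\sim\pi_h(\cdot\mid s)}Q^*(s,a_h)$ and negating. The difference is one of packaging: the paper invokes the performance difference lemma as a black box (Lemma 6.1 of Kakade and Langford, 2002) and performs only the final algebra, whereas you re-derive that lemma from scratch via the telescoping identity on $V^{\pi_h}$ along trajectories of $\pi_n$, the Bellman relation $Q^*(s,a)=r(s,a)+\gamma\expect_{s'\sim\mathcal{P}(\cdot\mid s,a)}V^{\pi_h}(s')$, and the conversion of the discounted time sum into $d_{\pi_n}$. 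Your route is self-contained and makes explicit the analytic fine print that the citation silently absorbs (uniform boundedness of $V^{\pi_h}$, vanishing of the boundary term $\gamma^{T+1}V^{\pi_h}(s_{T+1})$, dominated convergence for the sum--expectation interchange); the paper's route is shorter at the cost of deferring exactly those points to the reference. Both arguments keep the asymmetry straight (state distribution under $\pi_n$, value functions of $\pi_h$) and arrive at the identity with the same sign, so there is no gap in either.
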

\begin{proof}
We denote the Q-function of novice policy $\pi_n$ as $Q_n(s, a) = \expect\limits_{\tau\sim P_{\pi_n}}[
\sum\limits_{t=0}^{\infty} \gamma^{t} r(s_t, a_t) \big | s_0 = s, a_0 = a].$

We denote value functions of $\pi_h, \pi_n$ as $V^*(s) = \expect\limits_{a \sim \pi_h(s)} Q^*(s,a)$ and $V_n(s) = \expect\limits_{a \sim \pi_n(s)} Q_n(s,a)$, respectively. And we have $ J(\pi_h) = \expect\limits_{s_0 \sim d_0}  V^*(s_0)$, and $ J(\pi_n) = \expect\limits_{s_0 \sim d_0}  V_n(s_0)$.

We define the advantage function of $\pi_h$ as $A^*(s,a) = Q^*(s,a) - V^*(s)$.

By the performance difference lemma (Lemma 6.1, ~\citep{kakade2002approximately}), we have
\begin{equation}
   \expect\limits_{s_0 \sim d_0} \big [  V_n(s_0) - V^*(s_0) \big] = \frac{1}{1 - \gamma} \expect\limits_{s \sim d_{\pi_n}} \big[ \expect\limits_{a \sim \pi_n(s)} A^*(s, a) \big].
\end{equation}

This implies that
\begin{equation}
\begin{aligned}
    \expect_{s_0 \sim d_0} \big [  V_n(s_0) - V^*(s_0) \big] &= \frac{1}{1 - \gamma} \expect\limits_{s \sim d_{\pi_n}} \big[ \expect\limits_{a \sim \pi_n(s)} [Q^*(s,a) - V^*(s)] \big] \\
    &= \frac{1}{1 - \gamma} \expect\limits_{s \sim d_{\pi_n}} \big[- V^*(s) +\expect\limits_{a \sim \pi_n(s)} Q^*(s,a)\big] \\
    &= \frac{1}{1 - \gamma} \expect\limits_{s \sim d_{\pi_n}} \big[- \expect\limits_{a \sim \pi_h(s)} Q^*(s, a) +\expect\limits_{a \sim \pi_n(s)} Q^*(s,a)\big]. \\
\end{aligned}
\end{equation}

Multiplying $-1$ on both sides, we can obtain that
\begin{equation}
\begin{aligned}
J(\pi_h) - J(\pi_n) &=  \frac{1}{1 - \gamma} \expect\limits_{s \sim d_{\pi_n}} \left[ \expect\limits_{a_h \sim \pi_h(s)}  Q^*(s, a_h)  - \expect\limits_{ a_n \sim \pi_n(s)}  Q^*(s, a_n) \right]. \\
\end{aligned}
\end{equation}
\end{proof}

\section{Ethics Statement}\label{sec:ethics}
Our Predictive Preference Learning from Human Interventions (PPL) delivers a human-friendly, human-in-the-loop training process that increases automation while minimizing expert effort, advancing more intelligent AI systems with reduced human burden. All the experiments are conducted entirely in simulation, ensuring no physical risk to participants. All volunteers provided informed consent, were compensated above local market rates, and could pause or withdraw from the study at any time without penalty. Individual sessions lasted less than one hour, with a mandatory rest period of at least three hours before any subsequent participation. No personal or sensitive data was collected or shared. We have obtained the IRB approval to conduct this project.

While PPL promises positive social impact by streamlining human-AI collaboration, it may also encourage overreliance on automated systems or inherit biases present in expert involvement.


\end{document}